\documentclass{article}

\usepackage[main, final]{neurips_2026}

\usepackage[utf8]{inputenc}
\usepackage[T1]{fontenc}
\usepackage{hyperref}
\usepackage{url}
\usepackage{booktabs}
\usepackage{amsfonts}
\usepackage{amsmath}
\usepackage{amssymb}
\usepackage{amsthm}
\usepackage{nicefrac}
\usepackage{microtype}
\usepackage{xcolor}
\usepackage{graphicx}
\usepackage{multirow}
\usepackage{bm}
\usepackage{algorithm}
\usepackage{algpseudocode}
\usepackage{enumitem}
\usepackage{array}
\usepackage{tikz}
\usetikzlibrary{calc}
\usepackage{float}

\theoremstyle{plain}
\newtheorem{theorem}{Theorem}

\newtheorem{proposition}[theorem]{Proposition}
\newtheorem{corollary}[theorem]{Corollary}

\theoremstyle{definition}

\theoremstyle{remark}

\newcolumntype{C}[1]{>{\centering\arraybackslash}p{#1}}
\newcolumntype{R}[1]{>{\raggedleft\arraybackslash}p{#1}}

\newcommand{\dout}{d_{\text{out}}}
\newcommand{\din}{d_{\text{in}}}
\newcommand{\LoRASpace}{\textsc{LoRASpace}}

\title{\LoRASpace{}: A Pool-Wide Shared Substrate for\\
Static and Dynamic Multi-LoRA Composition}

\author{%
  \parbox{0.95\textwidth}{\centering
    Thamirawaran~Sathiyalogeswaran\textsuperscript{1},\;
    Navindu~De~Silva\textsuperscript{1},\;
    Chamikara~Siriwardane\textsuperscript{1},\;
    W.~K.~Alwis\textsuperscript{1},\;
    Jason~Mars\textsuperscript{2},\;
    Udaya~Sampath~K.~Perera~Miriya~Thanthrige\textsuperscript{1},\;
    Ranga~Rodrigo\textsuperscript{1}
    \\[8pt]
    \mdseries
    \textsuperscript{1}University of Moratuwa, Sri Lanka \qquad
    \textsuperscript{2}University of Michigan, USA
  }%
}

\begin{document}

\maketitle

\begin{abstract}
Low-rank adaptation (LoRA) attaches small low-rank matrices to a frozen backbone, and public repositories like Hugging Face host many such LoRAs per backbone, each for a different concept, character, style, or task. Users routinely want to combine several LoRAs in a single generation, but two existing strategies involve a trade-off with each other. First, combining the LoRA weights into a single update before inference (e.g., static merge) needs only one forward pass, but quality drops, because LoRAs trained for different concepts interfere when their weights are summed into a single update, distorting each LoRA's contribution. Second, running each LoRA in a separate forward pass and summing the outputs (e.g., composite) preserves output quality, but each LoRA adds a full backbone pass, so inference time scales linearly with the active LoRA count. To recover composite-level quality at merge cost, we study a property of LoRA pools overlooked by prior multi-LoRA work. In real LoRA pools, the stacked low-rank factors need far fewer directions than their summed ranks. Using singular value decomposition, we find the minimal subspace that preserves 99\% of the factor energy. On three diffusion pools, only 46\% to 57\% of the sum of individual LoRA ranks is enough to represent the pool, and the subspace grows sub-linearly with pool size. A 48-LoRA sub-pool of LoRA-Hub on FLAN-T5 NLP tasks shows comparable overlap (about 62\%), while the diversity-curated LoRA-Retriever benchmark sits near the no-overlap baseline at 96\%, indicating that across these five pools pool composition rather than modality predicts where compression is large. We use this overlap to construct \LoRASpace{}, a training-free pool-wide method where the subspace matrices are computed once per pool and reused across two inference modes: a static mode at merge inference cost, and a dynamic mode for input-dependent routing at substantially lower cost than composite. On the diffusion pools, static \LoRASpace{} matches composite quality at merge cost, and dynamic \LoRASpace{} achieves more than $6\times$ speedup over composite at six active LoRAs on a 17-LoRA pool, mostly by running one backbone pass instead of six. In our ablation, static variants at the same rank match \LoRASpace{}, so the quality gain does not depend on the pool-wide basis. On the near-orthogonal LoRA-Retriever benchmark, the framework predicts limited compression benefit, with gains coming from external retrieval coupling. Code is available at \url{https://github.com/Thamirawaran/loraspace}.
\end{abstract}

\section{Introduction}
\label{sec:intro}

Low-Rank Adaptation (LoRA)~\citep{hu2022lora} has made the specialization of existing language and diffusion image generation models feasible by training small low-rank \emph{adapters} that attach to a frozen backbone on demand, rather than updating the full set of model weights. The simplicity of this construction has produced a second-order effect, where open repositories such as CivitAI~\citep{civitai2023} and Hugging Face~\citep{huggingface2016} now host LoRA pools containing many trained LoRA adapters per backbone, each capturing a distinct visual concept, character, style, or downstream language task. As these libraries grow into the thousands, attention has shifted from training a single LoRA adapter (\emph{LoRAs} for short) to using several at once. A user generating a portrait may want one LoRA for the subject's identity, a second for clothing, and a third for the background, all active during a single image. 

Two ambient strategies have emerged for this multi-LoRA problem. The first, static merge~\citep{ilharco2023taskarith,yadav2023ties,yu2024dare}, folds the chosen LoRAs into a single weight update (\emph{delta} for short) before inference, so only one forward pass is required. Static merge is cheap to deploy but tends to lose \emph{individual} LoRA identity, since unrelated low-rank directions are summed in the ambient parameter space. The second strategy, often called composite~\citep{zhong2024composlora,zou2025cmlora}, runs one forward pass per active LoRA and aggregates the outputs. Composite preserves quality more faithfully but pays linearly in \emph{compute} and \emph{memory} with the number of active LoRAs. On modern diffusion backbones, this can mean several minutes per image at high LoRA counts, with proportional cost growth on language models. The two strategies sit at opposite corners of a quality-cost trade-off because both treat each LoRA as an independent low-rank addition in ambient space.

\paragraph{LoRASpace.} We observe that both the merge and composite views ignore a structural property of real LoRA pools. When the per-LoRA factors of a pool are stacked and decomposed jointly, the resulting subspace is much smaller than the sum of individual ranks. On language LoRAs trained on related tasks this reflects alignment between adapters, while on diffusion pools it comes largely from layer-width saturation and each adapter's own spectral tail (\S\ref{sec:experiments:overlap}). We capture this with the rank-fill ratio $\rho = D / \sum_i r_i$, where $D$ is the rank-truncated subspace dimension obtained by an energy-truncated SVD of the stacked factors, and $\sum_i r_i$ is the no-overlap upper bound. Based on this overlap, we introduce \LoRASpace{}, a training-free composition operator whose goal is to recover composite-level quality at near-merge inference cost while supporting input-dependent routing, built from a rank-truncated pool-wide substrate $(U, V)$ computed once at pool-creation time and reused for any active subset of LoRAs chosen at inference. The substrate supports two deployment modes from one cache. A \emph{static mode} folds time-invariant routing into a single combined delta at ambient-merge inference cost. A \emph{dynamic mode} accepts context-dependent routing through a router function $\{w_i\}$ that maps the current generation context (token, step, pixel, or prompt) to per-LoRA weights, at substantially lower cost than composite. On diffusion pools, \LoRASpace{} matches or exceeds composite quality at merge cost ($+0.74$ CLIP-T on Pool A) and reaches $6.1\times$ speedup over composite at high LoRA counts, mostly by running one backbone pass instead of one per LoRA. On the near-orthogonal LoRA-Retriever LLM benchmark, the framework predicts no compression speedup, and accuracy gains over ambient mixture come from pairing the substrate with external retrieval ($+0.96$ Macro Task).

\paragraph{Contributions.}
\begin{itemize}[leftmargin=*,topsep=2pt,itemsep=1pt]
    \item \textbf{Cross-modality rank-fill measurement.} Among the first measurements, to our knowledge, of the pool-wide rank-fill ratio $\rho = D / \sum_i r_i$ at energy threshold $\tau = 0.99$ on real LoRA pools: $\rho \le 0.57$ on diffusion pools with sub-linear scaling and $\rho \approx 0.62$ on a related-task LLM pool (LoRA-Hub at $K{=}48$), versus $\rho \approx 0.96$ on the diversity-curated LoRA-Retriever benchmark, with pool composition rather than modality predicting where compression is large.
    \item \textbf{Dual-mode (static + dynamic) pool-wide substrate.} \LoRASpace{} is a rank-truncated pool-wide operator supporting two deployment modes from one precompute. Static-mode folds time-invariant routing into a single combined delta at ambient-merge cost. Dynamic-mode evaluates context-conditioned routing (per-token, per-step, per-pixel, per-prompt) through the same shared bases. The dynamic-mode interface is router-pluggable, and we evaluate four routers (\S\ref{sec:experiments:routing}).
    \item \textbf{Mechanism ablation.} The quality lift over ambient merge is rank-$D$ truncation acting as regularisation. The pool-wide basis is \emph{approximately combo-optimal at that rank}, so one SVD amortises across all combinations and supports dynamic routing without per-combo SVDs (\S\ref{sec:experiments:quality}).
\end{itemize}

\section{Related Work}
\label{sec:related}

\providecommand{\xmark}{\ensuremath{\text{\sffamily X}}}
\begin{table}[t]
\centering
\scriptsize
\setlength{\tabcolsep}{1pt}
\caption{Where \LoRASpace{} sits among training-free multi-LoRA methods. Columns: pool-wide basis $(U, V)$ (defined in \S\ref{sec:method:operator}), active LoRAs per step, deployment mode (static vs.\ dynamic), and per-image cost normalised to ambient merge ($1\times$). To our knowledge, no other reviewed family combines a pool-wide basis with simultaneous $N{\ge}2$ composition and both deployment modes from one precompute (Eq.~\eqref{eq:loraspace_operator}).}
\label{tab:design_space}
{%
\begin{tabular}{@{}p{8cm}C{0.9cm}C{0.9cm}C{1cm}R{2.8cm}@{}}
\toprule
Method family (representative work) & Pool-wide basis & Active per step & Mode & Per-image cost \\
\midrule
Static merge (Task Arith.~\citep{ilharco2023taskarith}, TIES~\citep{yadav2023ties}, DARE~\citep{yu2024dare}) & \xmark & $N$ & static & $1\times$ \\
Shared-basis merge (Knots~\citep{stoica2025knots}, Core-Space~\citep{corespace2025}) & per merge & $N$ & static & $1\times$ \\
Composite \& step/mask routing & \xmark & $N$ & dynamic & $\sim N\times$ \\
\quad (ComposLoRA~\citep{zhong2024composlora}, CMLoRA~\citep{zou2025cmlora}, FreeFuse~\citep{freefuse2025}, MultLFG~\citep{multlfg2025}) & & & & \\
Per-input router-bound composition & \xmark & $1$ to $N$ & dynamic & $\sim N\times$ \\
\quad (SpectR~\citep{fleshman2025spectr}, K-LoRA~\citep{klora2025}, FreeLoRA~\citep{freelora2025}, LoRAHub~\citep{huang2024lorahub}, LoRA-Retriever~\citep{zhao2024loraretriever}) & & & & \\
Compress-then-Serve~\citep{gabrielsson2025compress} & \checkmark & $1$ & static & $1\times$ \\
Trained MoE-LoRA (X-LoRA~\citep{buehler2024xlora}, MoLE~\citep{wu2024mole}, Arrow~\citep{ostapenko2024arrow}, PHATGOOSE~\citep{phatgoose})$^\dagger$ & \xmark & $N$ & dynamic (trained) & $\sim N\times$ \\
\midrule
\textbf{\LoRASpace{} (ours)}                         & \textbf{\checkmark} & $\boldsymbol{N}$ & \textbf{static + dynamic} & \textbf{$\sim 1\times$ static, sub-composite dynamic$^\ddagger$} \\
\bottomrule
\end{tabular}%
}
\\[2pt]
{
\scriptsize 
$^\dagger$Trained Mixture-of-Experts LoRA (MoE-LoRA) methods require joint LoRA+router training, unlike the post-hoc training-free setting of the other rows. $^\ddagger$Dynamic-mode uses one base-model pass; LoRA-side work is linear in active $N$ inside the $D$-dimensional subspace. Measured on Pool C: $2.3\times$ faster than composite at $N{=}2$, growing to $6.1\times$ at $N{=}6$ (\S\ref{sec:experiments:quality}).}
\end{table}

\paragraph{Where \LoRASpace{} sits.}
Existing training-free multi-LoRA methods trade off cost against compositionality. Static merges and shared-basis merges collapse the active subset into one delta at ambient cost ($1\times$), but freeze the composition before inference. Composite, step/mask routing, and per-input router methods recover input-dependent routing at $\sim N\times$ cost, because each active LoRA runs as its own forward pass. The closest neighbour, Compress-then-Serve~\citep{gabrielsson2025compress}, introduces a pool-wide shared basis but targets single-LoRA serving ($N{=}1$) rather than $N\ge 2$ composition. Tab.~\ref{tab:design_space} arranges these methods along four axes (pool-wide basis, active LoRAs per step, static vs.\ dynamic, per-image cost) and identifies the empty slot. No reviewed method combines a pool-wide basis with simultaneous $N\ge 2$ composition and both deployment modes (defined in \S\ref{sec:method:operator}) from one precompute. \LoRASpace{} fills it. The decomposition itself is standard. What is new is applying it once to a whole pool, so that every adapter keeps its own coordinates in one shared space and any combination is served in one pass from one compact object. Two further capabilities separate it from the closest pool-level methods. $\rho$ predicts before deployment whether a pool compresses (\S\ref{sec:experiments:overlap}), and a newly published adapter can be admitted by extending the bases without recomputing the pool (App.~\ref{app:dynamic_pools}). App.~\ref{app:capabilities} compares the methods on these requirements. We exercise three adaptive routers (spatial~\citep{fleshman2025spectr}, temporal~\citep{zou2025cmlora}, semantic~\citep{freefuse2025}) plus a uniform-merge baseline~\citep{ilharco2023taskarith}, all adapted from existing designs.

\paragraph{Static merge.}
Static merge methods fold a chosen subset of LoRAs into a single ambient delta before inference. Task-arithmetic~\citep{ilharco2023taskarith} averages deltas, while TIES-Merging~\citep{yadav2023ties} resolves sign conflicts and DARE~\citep{yu2024dare} adds drop-and-rescale delta sparsification. The shared property is that all active LoRAs become one low-rank update, so inference matches a single-LoRA model in cost. Quality \emph{drops} on multi-LoRA tasks~\citep{zhong2024composlora}, which is usually attributed to interference between LoRAs trained for different concepts when their deltas are summed.

\paragraph{Composite and route-then-apply.}
Composite methods avoid this interference by keeping LoRAs \emph{separate} at inference. ComposLoRA~\citep{zhong2024composlora} runs one full forward pass per active LoRA, paying linearly in compute and memory with $N$. Adaptive selection reduces cost through dynamic routing: CMLoRA~\citep{zou2025cmlora} and MultLFG~\citep{multlfg2025} schedule dominant LoRAs across diffusion steps via frequency-domain signals; FreeFuse~\citep{freefuse2025} routes subject tokens to spatial regions at test time; ZipLoRA~\citep{shah2024ziplora}, K-LoRA~\citep{klora2025}, and FreeLoRA~\citep{freelora2025} target narrower 2-LoRA / autoregressive personalization settings. None of these expose a pool-wide precomputed substrate handling arbitrary multi-LoRA composition with both static and dynamic deployment. Orthogonality-based merge analyses such as OMCD~\citep{omcd2025} are complementary: they study inter-LoRA orthogonality as an explanatory variable for merge quality, where our framework treats pool-wide rank-fill as the structural variable.

\paragraph{Subspace methods.}
A separate line of work compresses LoRA representations through linear projection. Knots~\citep{stoica2025knots} column-stacks per-LoRA deltas $\Delta W_i = B_i A_i$ and applies one SVD as an alignment tool before TIES-style merging on small LoRA sets. CoreSpace~\citep{corespace2025} stacks per-LoRA factors and aligns them in a fixed $\sum_i r_i$ basis without truncation. Reported evaluations are limited to small-pool settings, so we do not include its numbers in our pool-scale comparison. Compress-then-Serve~\citep{gabrielsson2025compress} learns a pool-wide shared basis $(U, V)$ together with per-LoRA middle matrices $\Sigma_i$ to compress the joint product $B_i A_i$ (the per-LoRA delta), for efficient serving of many single-LoRA requests on LLM backbones. Systems such as Punica~\citep{chen2024punica} and SLoRA~\citep{sheng2024slora} optimise serving many LoRA-specialised requests, but do not address simultaneous semantic composition of multiple LoRAs in one generation. CtS notes after its Proposition~1 that training noise makes the joint rank generically full ($\tilde r = \sum_i r_i$); our energy-truncated measurement of $D$ at $\tau{=}0.99$ is consistent with this high-effective-rank regime on the LLM pool ($\rho{=}0.96$) but substantially smaller on diffusion pools ($\rho \in [0.46, 0.57]$, Tab.~\ref{tab:rank_fill}). \LoRASpace{} differs from CtS in three respects: (i) the inference setting is simultaneous $N{\ge}2$ composition rather than single-LoRA serving; (ii) we use paired marginal SVDs to factorise the energy-normalised $B$ and $A$ stacks separately rather than the joint product; (iii) the substrate exposes a router-compatible operator $U(\sum_i w_i \hat{B}_i \hat{A}_i)V^\top x$ (defined in \S\ref{sec:method:operator}) supporting both static and dynamic deployment from one precompute.

\paragraph{Single-active routing.}
Switch-style methods activate exactly one LoRA at any token or step (SpectR~\citep{fleshman2025spectr}, X-LoRA~\citep{buehler2024xlora}, PHATGOOSE~\citep{phatgoose}, MoLE~\citep{wu2024mole}, Arrow~\citep{ostapenko2024arrow}). They pick one LoRA at each location, so the multi-LoRA combination question never arises and they are not directly comparable to composition methods. We report a switch baseline in App.~\ref{app:switch_comparison} for completeness.

\paragraph{LLM-side composition.}
LoRA-Retriever~\citep{zhao2024loraretriever} composes task-specialised LoRAs for LLMs by selecting a top-$K$ subset via INSTRUCTOR~\citep{su2023instructor} embeddings, then either fusing factors ($B_p A_q$ cross-terms) or running a weighted ambient mixture. LoRAHub~\citep{huang2024lorahub} optimises LoRA weights with CMA-ES~\citep{hansen2001cma} on a held-out validation set. Both treat composition as ambient-space averaging; neither exploits pool-level rank structure. \LoRASpace{}'s operator $\sum_i w_i \hat{B}_i \hat{A}_i$ contains only per-LoRA products by construction, so the cross-product approximation error that degrades fusion-style factor combination (App.~\ref{app:llm_full}) does not arise. We use the LoRA-Retriever pool to characterise the high-effective-rank regime as a counterpoint to the diffusion regime.

\section{Method}
\label{sec:method}
A LoRA $i$ (e.g., an image style LoRA) comprises two matrices $A_i$ and $B_i$ for each layer. In general, as indicated later in \S\ref{sec:method:setup}, a scaled linear combination of $A_iB_i$ pairs produces the desired delta. In \LoRASpace{}, we produce these deltas by pre-computing a decomposition (construction \S~\ref{sec:method:basis}) of the stacked $A_i$s and $B_i$s and using their sub-space representations (low-dimension) $\hat{A}_i$ and $\hat{B}_i$ appropriately projected. As we control the rank of this decomposition, and hence the number of actual deltas, the computational complexity drastically reduces, as we show in the following sections. 

\begin{figure}[!htbp]
\centering
\tikzset{
  bowtie/.pic={
    \tikzset{bowtie/.cd,#1}
    \def\w{\pgfkeysvalueof{/tikz/bowtie/width}}
    \def\h{\pgfkeysvalueof{/tikz/bowtie/height}}
    \def\d{\pgfkeysvalueof{/tikz/bowtie/depth}}
    \def\col{\pgfkeysvalueof{/tikz/bowtie/color}}    

    \coordinate (A) at (-\w/2,-\h/2);
    \coordinate (B) at (-\w/2,\h/2);
    \coordinate (C) at (0, \d/2) ;
    \coordinate (D) at (0, -\d/2);
    \coordinate (E) at (\w/2, \h/2);
    \coordinate (F) at (\w/2, -\h/2);

    \fill[\pgfkeysvalueof{/tikz/bowtie/fill1}] (A)--(B)--(C)--(D)--cycle;
    \fill[\pgfkeysvalueof{/tikz/bowtie/fill2}] (D)--(C)--(E)--(F) --cycle;

    \draw[\pgfkeysvalueof{/tikz/bowtie/draw}]
      (A)--(B)--(C)--(D)--cycle
      (D) -- (F)--(E) -- (C) -- cycle;
      
      \coordinate (-center) at (0,0);
       \coordinate (-nw) at (B);
        \coordinate (-se) at (F);
        \coordinate (-e) at ($(F)!0.5!(E)$);
        \coordinate (-w) at ($(A)!0.5!(B)$);
  },
  bowtie/.cd,
  width/.initial=2,
  height/.initial=1.2,
  fill1/.initial=orange!30,
  fill2/.initial=orange!20,
  draw/.initial=\col!80!black,
  depth/.initial=1,
  color/.initial=black
 
}

\begin{tikzpicture}
  
  \pic (girl) at (0,0) {bowtie={width=1.6, height=1, depth=0.4, color=orange, fill1=orange!30, fill2=orange!10}} ;
  \node (gl) at (girl-center) [] {Girl};
  \node at (gl) [anchor=north, yshift = -0.35cm] {$+$};
  
    \pic (forest) at (0,-1.2) {bowtie={width=1.6, height=1, depth=0.4, color=green, fill1=green!30, fill2=green!10}} ;
  \node (fl) at (forest-center) [] {Forest};
    \node at (fl) [anchor=north, yshift = -0.35cm] {$+$};
  
    \pic (angry) at (0,-2.4) {bowtie={width=1.6, height=1, depth=0.4, color=blue, fill1=blue!30, fill2=blue!10}} ;
  \node (al) at (angry-center) [] {Angry};  
  
  \draw[dashed] ($(girl-nw) + (-0.1, 0.1)$) rectangle ($(angry-se) + (0.1, -0.1)$);
   \pic (merge) at ($(forest-center) + (2,0)$) {bowtie={width=1.6, height=1, depth=0.4, color=purple, fill1=purple!30, fill2=purple!10}} ;
   
   \draw[-latex] ($(forest-e)+(0.1, 0)$) -- (merge-w);
   
    \node at ($(merge-center) + (0, -1.5)$) (mergelabel) [text width=2cm, align=center] {Merge by summing weights};
   
   \node at ($(angry-center) + (0, -1)$) (mergelabel) {Merge};
    \node at ($(mergelabel) + (4.8, 0)$) (compositelabel) {Composite};
  \node at ($(mergelabel) + (10, 0)$) (loraspacel) {LoRASpace (ours)};

   \begin{scope}[xshift=4.8cm]
	  \pic (girl) at (0,0) {bowtie={width=1.6, height=1, depth=0.4, color=orange, fill1=orange!30, fill2=orange!10}} ;
	  \node (gl) at (girl-center) [] {Girl};

	    \pic (forest) at (0,-1.2) {bowtie={width=1.6, height=1, depth=0.4, color=green, fill1=green!30, fill2=green!10}} ;
	  \node (fl) at (forest-center) [] {Forest};

	    \pic (angry) at (0,-2.4) {bowtie={width=1.6, height=1, depth=0.4, color=blue, fill1=blue!30, fill2=blue!10}} ;
	  \node (al) at (angry-center) [] {Angry};  
	  
	\coordinate (o) at (-1.5, -1.2);
	\coordinate (e) at (1.0, -1.2);
	\draw [-latex] (o) -- (forest-w);
	\draw [-latex] (o) ++ (0.5, 0) |- (girl-w);
	\draw [-latex] (o) ++ (0.5, 0) |- (angry-w);

	   \node (sum) at ($(forest-center) + (1.85,0)$) [draw, circle] {\scriptsize $\sum$} ;
       \node at (sum) [anchor= north, yshift=-0.5cm, text width=2cm, align=center]  {Compose: summing generated outputs};
	   
	   \draw[-latex] (forest-e) -- (sum);   
	   \draw[] (girl-e) -| (e);
	   \draw[] (angry-e) -| (e);
\end{scope}

   \begin{scope}[xshift=10cm]
	  \pic (girl) at (0,0) {bowtie={width=1.6, height=1, depth=0.4, color=orange, fill1=orange!30, fill2=orange!10}} ;
	  \node (gl) at (girl-center) [] {Girl proj.};

	    \pic (forest) at (0,-1.2) {bowtie={width=1.6, height=1, depth=0.4, color=green, fill1=green!30, fill2=green!10}} ;
	  \node (fl) at (forest-center) [] {Forest proj.};

	    \pic (angry) at (0,-2.4) {bowtie={width=1.6, height=1, depth=0.4, color=blue, fill1=blue!30, fill2=blue!10}} ;
	  \node (al) at (angry-center) [] {Angry proj.};  
	  
	\coordinate (o) at (-1.5, -1.2);
	\coordinate (e) at (1.0, -1.2);
	\draw [-latex] (o) -- (forest-w);
	\draw [-latex] (o) ++ (0.5, 0) |- (girl-w);
	\draw [-latex] (o) ++ (0.5, 0) |- (angry-w);

	   \node (sum) at ($(forest-center) + (1.75,0)$) [draw, circle] {\scriptsize $\sum$} ;
	   
	   \draw[-latex] (forest-e) -- (sum);   
	   \draw[] (girl-e) -| (e);
	   \draw[] (angry-e) -| (e);
	   \draw[-latex] (sum) -- ++(0.75, 0) coordinate (opbasis);
	   
	   \draw [fill=purple!30] (opbasis) -- ++(0, 1) -- ++(0.5, 0.3) -- ++(0, -2.6) -- ++(-0.5, 0.3) -- cycle;
	   \node at ($(opbasis) +(0.3, -1.1)$) [anchor=west, rotate=90, align=center] {Output basis};
	   
	    \draw [fill=purple!30] (-1.5,-1.2) -- ++(0, 1) -- ++(-0.5, 0.3) -- ++(0, -2.6) -- ++(0.5, 0.3) -- cycle;
	   \node at ($(-2.05, -1.0) +(0.3, -1.1)$) [anchor=west, rotate=90, align=center] {Input basis};	    
\end{scope}

\end{tikzpicture}
\caption{Three strategies for composing $N$ active LoRAs from a \emph{Combo} (the active subset for one generation). \emph{Merge} folds them into one delta with one forward pass. \emph{Composite} runs $N$ forward passes. \emph{\LoRASpace{}} projects via $V^\top$, mixes $\sum_i w_i \hat B_i \hat A_i$, and back-projects via $U$ in one pass. Static-mode pre-folds the inner sum when $\{w_i\}$ are time-invariant; dynamic-mode evaluates it per UNet pass (precompute illustration in App.~\ref{app:precompute_fig}).}
\label{fig:method_overview}
\end{figure}

\subsection{Setup and notation}
\label{sec:method:setup}

Consider a single LoRA-modified linear layer with frozen base weight $W_0 \in \mathbb{R}^{\dout \times \din}$. A LoRA~\citep{hu2022lora} augments this layer with a low-rank update $\Delta W_i = s_i B_i A_i$, where $B_i \in \mathbb{R}^{\dout \times r_i}$ and $A_i \in \mathbb{R}^{r_i \times \din}$ are the LoRA's left and right factors, $r_i \ll \min(\din, \dout)$ is the LoRA rank, and $s_i = \alpha_i / r_i$ is the per-LoRA scaling factor with $\alpha_i$ the LoRA scaling hyperparameter (set at training time). We write $\bar r = (1/K)\sum_i r_i$ for the average pool rank. Given a pool $\mathcal{P} = \{1, \dots, K\}$ of trained LoRAs, the multi-LoRA composition target on input $x \in \mathbb{R}^{\din}$ is
\begin{equation}
\Delta y^{\star}(x) = \sum_{i=1}^{N} w_i\, s_i B_i A_i\, x,
\label{eq:target_multilora}
\end{equation}
where $\{w_i\}_{i=1}^{N}$ are per-LoRA weights for an active subset of size $N \le K$. Static merge precomputes a fixed sum once and discards routing. Composite evaluates Eq.~\eqref{eq:target_multilora} by running $N$ forward passes per token and aggregating outputs. \LoRASpace{} evaluates a router-weighted form of Eq.~\eqref{eq:target_multilora} in one base-model pass with LoRA-side work linear in $N$ inside a shared $D$-dimensional subspace ($D \ll \dout$), exploiting structural rank overlap.

\subsection{Energy-normalised shared subspace}
\label{sec:method:basis}

The motivating observation is empirical. When per-LoRA factors from a real diffusion pool are stacked and decomposed by paired marginal SVDs, the resulting subspace dimension is much smaller than $\sum_i r_i$. We characterise this empirically in \S\ref{sec:experiments:overlap}, which separates alignment between adapters from layer-width saturation and each adapter's own spectral tail. Here we use it to construct an energy-truncated shared subspace.

\paragraph{Frobenius normalisation.}
Raw factor stacking is dominated by high-norm LoRAs. Frobenius normalisation puts every LoRA on a common scale before decomposition:
\begin{equation}
\bar{B}_i = B_i / \|B_i\|_F, \qquad \bar{A}_i = A_i / \|A_i\|_F.
\label{eq:frobenius_norm}
\end{equation}
We then form pool stacks $\bar{B}_{\mathrm{stack}} = [\bar{B}_1 \cdots \bar{B}_K] \in \mathbb{R}^{\dout \times \sum_i r_i}$ and $\bar{A}_{\mathrm{stack}} = [\bar{A}_1; \cdots; \bar{A}_K] \in \mathbb{R}^{\sum_i r_i \times \din}$.

\paragraph{Paired marginal SVDs and adaptive truncation.}
We compute two thin SVDs $\bar{B}_{\mathrm{stack}} = U_B \Sigma_B V_B^\top$ and $\bar{A}_{\mathrm{stack}} = U_A \Sigma_A V_A^\top$, then truncate each at energy fraction $\tau \in (0, 1)$:
\begin{equation}
D_B = \min\Bigl\{d \;\Big|\; \frac{\sum_{j \le d}\sigma_j(\bar{B}_{\mathrm{stack}})^2}{\sum_{j}\sigma_j(\bar{B}_{\mathrm{stack}})^2} \ge \tau \Bigr\},
\quad D_A \text{ analogously.}
\label{eq:dim_selection}
\end{equation}
The shared subspace dimension is $D = \min(\max(D_B, D_A),\,\sum_i r_i,\,\din,\,\dout)$. The shared bases are $U = U_B[:,\, {:}D] \in \mathbb{R}^{\dout \times D}$ and $V = V_A[:,\, {:}D] \in \mathbb{R}^{\din \times D}$ (with $V_A$ the right-singular-vector matrix in $\bar A_{\mathrm{stack}} = U_A \Sigma_A V_A^\top$), the leading left and right singular vectors of $\bar B_{\mathrm{stack}}$ and $\bar A_{\mathrm{stack}}$, both with orthonormal columns. By the Eckart-Young theorem~\citep{eckart1936}, $U$ and $V$ are the optimal rank-$D$ subspaces for the marginal $B$-stack and $A$-stack reconstructions separately. This is optimal for each marginal factor projection but not a joint optimum for the product $\sum_i s_i B_i A_i$. A joint SVD of the scaled deltas is closer to product-optimal (per-LoRA error $0.35\%$ against $0.88\%$ on Pool C at equal $D$), but the difference does not reach the generated images, and the marginal bases are built from thin factors on CPU and can be extended when the pool changes (App.~\ref{app:joint_basis}). We find empirically that retaining the low-energy directions does not improve composition metrics within the practical $\tau$ range (\S\ref{sec:experiments:energy}).

\subsection{Per-LoRA projected factors}
\label{sec:method:factors}

Each LoRA is re-parameterised by projecting its raw factors onto the shared subspace:
\begin{equation}
\hat{B}_i = \sqrt{s_i}\; U^\top B_i \in \mathbb{R}^{D \times r_i},
\qquad
\hat{A}_i = \sqrt{s_i}\; A_i V \in \mathbb{R}^{r_i \times D}.
\label{eq:projected_factors}
\end{equation}
The LoRA scaling $s_i$ is split symmetrically as $\sqrt{s_i}$ on each side. The per-layer cache $\mathcal{C}_\ell = \{U, V, \{(\hat{B}_i, \hat{A}_i)\}_{i=1}^{K}\}$ is computed once per pool, stored on CPU memory, and streamed to the GPU as needed. Cache size is $O(D(\dout + \din) + K D \bar{r})$ versus $O(K \bar{r}(\dout + \din))$ for ambient storage, which gives a meaningful reduction whenever $D \ll \dout$.

\subsection{Composition operator and deployment modes}
\label{sec:method:operator}

At inference, the \LoRASpace{} operator processes input $x$ in three steps. Project once into the subspace $x_{\mathrm{proj}} = V^\top x \in \mathbb{R}^D$. Compute a per-LoRA subspace contribution $y_i = \hat{B}_i (\hat{A}_i x_{\mathrm{proj}}) \in \mathbb{R}^D$ for each active $i$. Mix and back-project to give $\Delta y^{\mathrm{LS}}(x) = U \sum_{i=1}^{N} w_i y_i$. Compactly,
\begin{equation}
\Delta y^{\mathrm{LS}}(x)
= U \!\left(\sum_{i=1}^{N} w_i\, \hat{B}_i \hat{A}_i\right) V^\top x.
\label{eq:loraspace_operator}
\end{equation}
The input projection $V^\top x$ and output back-projection $U(\cdot)$ are shared across all active LoRAs. The per-LoRA work is entirely inside the $D$-dimensional subspace. The shared bases do not orthogonalise adapters or split directions among them. $(U, V)$ spans the union of the pool's directions and each adapter keeps dense coordinates inside it. Across $4{,}800$ audited axes ($12$ SDXL layers, both sides, top $200$ per side) the median adapter-ownership Herfindahl index is $0.12$, so a typical axis carries about eight adapters. At full rank the operator equals the ambient weighted sum (Prop.~\ref{prop:exact_recovery}), so any interaction the ambient sum produces, such as a lighting LoRA changing how a character LoRA renders, is kept. Truncation only removes a low-energy tail (composed $N{=}5$ error $0.84\%$ at $\tau = 0.99$, App.~\ref{app:joint_basis}).

\paragraph{Static and dynamic deployment from one substrate.} When $\{w_i\}$ are time-invariant (uniform or per-prompt/pre-defined routing), the inner sum $M = \sum_i w_i \hat{B}_i \hat{A}_i \in \mathbb{R}^{D \times D}$ folds into a single ambient delta $\Delta = U M V^\top$ at ambient-merge cost (static-mode). Time-varying routers (per-token, per-step, per-pixel) require dynamic-mode. Eq.~\eqref{eq:loraspace_operator} is then evaluated in the subspace at every UNet pass, with shared $V^\top x$ and $U(\cdot)$ paid once per pass independent of $N$. The same $(U, V)$ serves both modes.

\paragraph{Static-mode is not ambient merge.}
Substituting Eq.~\eqref{eq:projected_factors} gives $\Delta y^{\mathrm{LS}}(x) = P_U \, \Delta y^{\star}(P_V x)$ with $P_U = UU^\top$, $P_V = VV^\top$ the projectors onto the truncated subspaces. Since $D < \min(\dout, \din)$ on most layers ($682$ of $722$ on Pool C), $P_U, P_V \neq I$ there and the two operators differ from the unprojected target $\Delta y^{\star}$. \S\ref{sec:experiments:quality} reports the empirical effect alongside a mechanism ablation.

\subsection{Reconstruction guarantee}
\label{sec:method:theory}

\begin{proposition}[Exact recovery]
\label{prop:exact_recovery}
If $\mathrm{col}(B_i) \subseteq \mathrm{col}(U)$ and $\mathrm{row}(A_i) \subseteq \mathrm{col}(V)$ for all $i \in \mathcal{P}$, then $\Delta y^{\mathrm{LS}}(x) = \Delta y^{\star}(x)$ for any weights $\{w_i\}$.
\end{proposition}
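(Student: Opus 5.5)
The plan is to substitute the projected factors of Eq.~\eqref{eq:projected_factors} into the operator Eq.~\eqref{eq:loraspace_operator} and show that the projectors $P_U = UU^\top$ and $P_V = VV^\top$ act as the identity on the relevant subspaces under the hypotheses. Since the scalings enter as $\sqrt{s_i}\cdot\sqrt{s_i} = s_i$, each summand becomes
\begin{equation*}
U \hat{B}_i \hat{A}_i V^\top x = s_i\, (UU^\top) B_i A_i (VV^\top) x = s_i\, P_U B_i A_i P_V x .
\end{equation*}
This is the identity $\Delta y^{\mathrm{LS}}(x) = P_U\,\Delta y^{\star}(P_V x)$ already noted in \S\ref{sec:method:operator}. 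Because both $U$ and the weighted sum are linear, $U$ distributes over $\sum_i w_i$, and the weights $w_i$ are scalars that commute through everything. This is why the conclusion holds for arbitrary $\{w_i\}$, including time-varying ones.

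The key lemma uses that $U$ and $V$ have orthonormal columns (they are truncated singular-vector matrices from \S\ref{sec:method:basis}). Hence $P_U$ is the orthogonal projector onto $\mathrm{col}(U)$, and $P_V$ is the orthogonal projector onto $\mathrm{col}(V)$.

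\begin{itemize}
\item If $\mathrm{col}(B_i) \subseteq \mathrm{col}(U)$, each column $b$ of $B_i$ satisfies $b = Uc$ for some $c$. Then $P_U b = U(U^\top U)c = Uc = b$, so $P_U B_i = B_i$.
\item For the right factor, $\mathrm{row}(A_i) \subseteq \mathrm{col}(V)$ means every row $a^\top$ of $A_i$ has $a \in \mathrm{col}(V)$. The same argument gives $P_V a = a$, i.e.\ $a^\top P_V = a^\top$ using symmetry of $P_V$. Hence $A_i P_V = A_i$.
\end{itemize}

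Plugging these into the summand gives $s_i B_i A_i x$. Summing with weights $w_i$ over the active subset then recovers Eq.~\eqref{eq:target_multilora} exactly.

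The only point needing care, and the mild obstacle, is the row-space condition. One has to read $\mathrm{row}(A_i)$ as a subspace of $\mathbb{R}^{\din}$ and remember to transpose, so that the projector acts on the right of $A_i$. One should also check that the Frobenius normalisation of Eq.~\eqref{eq:frobenius_norm} plays no role. It is used only to build $U$ and $V$, while $\hat B_i$ and $\hat A_i$ are formed from the raw $B_i$ and $A_i$, so no rescaling leaks into the result. Finally, I would remark that the hypothesis holds automatically when no truncation occurs, i.e.\ $D = \sum_i r_i$ with the stacks of full column and row rank respectively. This connects the proposition to the ``full rank'' claim made in \S\ref{sec:method:operator}.
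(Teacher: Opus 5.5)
Your proposal is correct and matches the paper's proof. Both substitute Eq.~\eqref{eq:projected_factors} into Eq.~\eqref{eq:loraspace_operator}, combine $\sqrt{s_i}\sqrt{s_i}=s_i$, and use $UU^\top B_i = B_i$ and $A_i VV^\top = A_i$ under the span hypotheses; your column-by-column and row-by-row argument spells out what the paper states in one line. Your closing aside does not need the full-rank qualifier, because keeping at least $\max(\mathrm{rank}\,\bar B_{\mathrm{stack}}, \mathrm{rank}\,\bar A_{\mathrm{stack}})$ leading singular vectors already places every $\mathrm{col}(B_i)$ in $\mathrm{col}(U)$ and every $\mathrm{row}(A_i)$ in $\mathrm{col}(V)$.
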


\begin{corollary}[Truncation error bound]
\label{cor:truncation_bound}
Let $\sigma^B_{D+1}, \sigma^A_{D+1}$ be the leading discarded singular values of $\bar{B}_{\mathrm{stack}}, \bar{A}_{\mathrm{stack}}$, and $\mu^B = \max_i \|B_i\|_F$, $\mu^A = \max_i \|A_i\|_F$. Then for any $x$ and $\{w_i\}$,
\begin{equation}
\|\Delta y^{\mathrm{LS}}(x) - \Delta y^{\star}(x)\|_2
\le \|x\|_2 \cdot \mu^A \mu^B \cdot (\sigma^B_{D+1} + \sigma^A_{D+1} + \sigma^B_{D+1}\sigma^A_{D+1}) \cdot \sum_i s_i |w_i|.
\label{eq:truncation_bound}
\end{equation}
\end{corollary}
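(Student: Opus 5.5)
We treat Corollary~\ref{cor:truncation_bound} as a perturbative version of Prop.~\ref{prop:exact_recovery}, argued one LoRA at a time. Substituting Eq.~\eqref{eq:projected_factors} into Eq.~\eqref{eq:loraspace_operator} gives, as noted after the operator definition,
$\Delta y^{\mathrm{LS}}(x) = \sum_i w_i s_i\, P_U B_i A_i P_V x$, with $P_U = UU^\top$ and $P_V = VV^\top$. The difference from Eq.~\eqref{eq:target_multilora} is therefore $\sum_i w_i s_i (B_iA_i - P_U B_i A_i P_V)x$. By the triangle inequality and $\|Mx\|_2 \le \|M\|_2\|x\|_2$, it suffices to prove
$$\|B_iA_i - P_UB_iA_iP_V\|_2 \le \mu^A\mu^B(\sigma^B_{D+1}+\sigma^A_{D+1}+\sigma^B_{D+1}\sigma^A_{D+1})$$
for each $i$. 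Multiplying by $s_i|w_i|$ and summing over $i$ then gives Eq.~\eqref{eq:truncation_bound}.

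For the per-LoRA bound, define the residuals $E^B_i = (I-P_U)B_i$ and $E^A_i = A_i(I-P_V)$. Then $P_UB_i = B_i - E^B_i$ and $A_iP_V = A_i - E^A_i$. Expanding the product gives
$$B_iA_i - (B_i-E^B_i)(A_i-E^A_i) = E^B_iA_i + B_iE^A_i - E^B_iE^A_i.$$
Submultiplicativity then bounds its norm by
$$\|E^B_i\|_2\|A_i\|_2 + \|B_i\|_2\|E^A_i\|_2 + \|E^B_i\|_2\|E^A_i\|_2.$$
This three-term split is exactly what produces the factor $\sigma^B+\sigma^A+\sigma^B\sigma^A$ in the statement.

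The key step is the residual bound $\|E^B_i\|_2 \le \|B_i\|_F\,\sigma^B_{D+1}$, and analogously $\|E^A_i\|_2 \le \|A_i\|_F\,\sigma^A_{D+1}$. Since $U$ consists of the top $D$ left singular vectors of $\bar B_{\mathrm{stack}}$, the Eckart--Young structure gives $\|(I-P_U)\bar B_{\mathrm{stack}}\|_2 = \sigma_{D+1}(\bar B_{\mathrm{stack}})$. The normalised factor $\bar B_i$ is a column block of the stack, and a column submatrix has spectral norm at most that of the full matrix. Hence $\|(I-P_U)\bar B_i\|_2 \le \sigma^B_{D+1}$, and rescaling by $\|B_i\|_F$ (Eq.~\eqref{eq:frobenius_norm}) gives the claim. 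The $A$ side is the same argument with row blocks and right singular vectors $V$.

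Combining these with $\|A_i\|_2 \le \|A_i\|_F \le \mu^A$ and $\|B_i\|_2 \le \mu^B$ bounds the three terms by $\mu^A\mu^B\sigma^B_{D+1}$, $\mu^A\mu^B\sigma^A_{D+1}$, and $\mu^A\mu^B\sigma^B_{D+1}\sigma^A_{D+1}$ respectively. The cross term carries a factor $\|A_i\|_F\|B_i\|_F \le \mu^A\mu^B$, so it matches the stated form.

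The main obstacle is bookkeeping around the truncation index rather than the analysis itself. $D$ is a single shared value, $\max(D_B,D_A)$ capped by $\sum_i r_i$, $\din$ and $\dout$, so the retained bases are the top-$D$ singular vectors on both sides, and $\sigma_{D+1}$ is correctly the first discarded value of each stack. When the cap makes $D$ equal to the stack rank or to a layer width, $\sigma_{D+1}$ should be read as $0$. Then $P_U$ or $P_V$ acts as the identity on the relevant column or row spaces, and the bound degenerates consistently to Prop.~\ref{prop:exact_recovery}.
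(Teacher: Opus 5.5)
Your proposal is correct and follows essentially the same route as the paper's proof. Both use the residual split $E^B_i = (I-UU^\top)B_i$, $E^A_i = A_i(I-VV^\top)$ into three error terms, the column-block (resp.\ row-block) argument giving $\|E^B_i\|_2 \le \|B_i\|_F\,\sigma^B_{D+1}$ and $\|E^A_i\|_2 \le \|A_i\|_F\,\sigma^A_{D+1}$, and the same Frobenius bounds collected over $i$; your note that $\sigma_{D+1}$ should be read as $0$ when the cap on $D$ binds is a small clarification the paper leaves implicit.
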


Proofs are in Appendix~\ref{app:proof_prop1}. At $\tau = 0.99$ the per-side discarded singular value is bounded by $\sqrt{(1-\tau)K} \le 0.41$ for $K \le 17$; \S\ref{sec:experiments:overlap} confirms this approximation does not degrade outputs in practice.

\subsection{Router interface}
\label{sec:method:routing}

The composition operator is router-pluggable. Any function producing per-LoRA weights $\{w_i\}$ plugs in unchanged. We adopt uniform weights ($w_i = 1/N$) as the default and characterise three adaptive routers as plug-in extensions: spatial (per-token activation energy, in the spirit of SpectR~\citep{fleshman2025spectr}), temporal (per-step subspace-band, following CMLoRA~\citep{zou2025cmlora}), and semantic (per-concept token-to-region routing, in the spirit of FreeFuse~\citep{freefuse2025}). Full specifications are in App.~\ref{app:router_specs}. For LLM pools, \LoRASpace{} pairs with an INSTRUCTOR retriever~\citep{su2023instructor} following the LoRA-Retriever protocol~\citep{zhao2024loraretriever}, showing that internal and external routing policies share a single composition substrate.

\subsection{Complexity}
\label{sec:method:complexity}

We count the incremental LoRA-side linear algebra for one layer processing one token; Appendix~\ref{app:complexity} reports the full per-step cost including base-model forwards. For $N$ active LoRAs of representative rank $\bar{r}$, ambient composition adds $O(N \bar{r}(\din + \dout))$ per layer, linear in both $N$ and ambient width. \LoRASpace{} costs
\begin{equation}
\mathrm{cost}_{\mathrm{LS}}
= \underbrace{O\!\bigl(D(\din + \dout)\bigr)}_{\text{shared projection}}
\;+\; \underbrace{O\!\bigl(N \bar{r} D\bigr)}_{\text{per-LoRA subspace paths}}.
\label{eq:complexity}
\end{equation}
The shared term is paid once regardless of $N$. As pool size $K$ grows, the projection size scales as $D \propto K^{\beta}$ with measured $\beta \in [0.43, 0.55]$ on diffusion pools (\S\ref{sec:experiments:overlap}), so the projection term grows sub-linearly with the pool. The per-LoRA term is linear in $N$ but scaled by $D$, which is much smaller than the ambient $\dout$. The adapter-side crossover with an optimised single-pass ambient mixture is $D < 2 N \bar{r} d / (2d + N \bar{r})$. This bound is always below $N \bar{r}$, so it is not met at the default $\tau = 0.99$ (Pool C, mean $D = 819$ against $N \bar{r} \approx 527$ at $N{=}5$). It is met at the compressed operating points validated in App.~\ref{app:joint_basis} ($\bar{D} \approx 107$ at $\tau = 0.70$ and $\approx 49$ at $\tau = 0.50$), so the subspace path saves adapter-side arithmetic only in that reduced-storage regime. Appendix~\ref{app:complexity} gives the full per-method comparison including memory.

\section{Experiments}
\label{sec:experiments}

We organise the evaluation around four findings: rank-fill regime characterisation across diffusion and LLM pools (\S\ref{sec:experiments:overlap}); a quality + mechanism comparison showing static-mode \LoRASpace{} delivers composite-level quality at ambient-merge cost and that the lift is rank-$D$ truncation in any basis (\S\ref{sec:experiments:quality}); robustness to the energy truncation threshold (\S\ref{sec:experiments:energy}); and a routing ablation showing router quality varies on Pool A (temporal helps, semantic is flat, spatial regresses on ImageReward; \S\ref{sec:experiments:routing}). A predictive-theory check on the near rank-orthogonal LLM regime, including external retrieval coupling, is in \S\ref{sec:experiments:llm} (App.~\ref{app:llm_full} for the full scoreboard).

\subsection{Setup}
\label{sec:experiments:setup}

\paragraph{Pools and backbones.} Pool A (11 reality-style LoRAs, SD\,1.5~\citep{rombach2022sd}) and Pool B (17 LoRAs, SD\,1.5) are drawn from CivitAI~\citep{civitai2023}. Pool C (17 reality-style LoRAs, SDXL~\citep{podell2024sdxl}) is curated from CivitAI and Hugging Face~\citep{huggingface2016}. The LLM pool is the 48-LoRA LLaMA-2-7B~\citep{touvron2023llama2} task-LoRA set from LoRA-Retriever~\citep{zhao2024loraretriever}.

\paragraph{Metrics.} For diffusion, we report CLIP-T~\citep{radford2021clip} for prompt alignment, ImageReward~\citep{xu2023imagereward} for human-preference alignment, and a GPT-4o~\citep{openai2024gpt4o} composition rubric (Composition Quality $+$ Image Quality, $0$ to $10$ each, position-bias averaged across two orderings) following ComposLoRA~\citep{zhong2024composlora}. For LLMs, we use the LoRA-Retriever Macro Task accuracy.

\paragraph{Software and hardware.} All diffusion experiments run on a single RTX\,3090 (24\,GB) using PyTorch~\citep{paszke2019pytorch} and the \texttt{diffusers} library~\citep{diffusers2022}, with DPM-Solver++~\citep{lu2025dpmsolverpp} as the sampler. Pool A: 100 denoising steps, $512 \times 512$. Pool C: 50 denoising steps, $1024 \times 1024$. We use 240 stratified combinations per seed for Pool A, distributed across active sizes $N \in \{2, 3, 4, 5\}$, and 200 for Pool C, across $N \in \{2, \dots, 6\}$. Pool B is reported in the appendix at $N \in \{7, 8\}$ (Appendix~\ref{app:pool_b}) to probe the high-cardinality regime where the rank-fill ratio is lowest ($\rho = 0.47$ at $K = 17$); Pool A and Pool C cover the full active-LoRA range.

\subsection{Subspace dimension grows sub-linearly with pool size}
\label{sec:experiments:overlap}

We measure the rank-fill ratio $\rho = D / \sum_i r_i$ at energy threshold $\tau = 0.99$ (the value used throughout) and the growth exponent $\beta$ obtained by fitting $D \propto K^{\beta}$ with category-aware random subsampling at three random seeds. Table~\ref{tab:rank_fill} summarises the result. The three diffusion pools have $\rho$ between $0.46$ and $0.57$ and $\beta$ between $0.43$ and $0.55$, both far below the $\rho = 1$, $\beta = 1$ no-overlap baseline. The two LLM pools span the full range: a 48-LoRA sub-sample of LoRA-Hub over related FLAN-T5 NLP tasks lands at $\rho = 0.62$, $\beta = 0.88$, comparable to diffusion overlap, while the LoRA-Retriever benchmark, curated for retrieval-task diversity, sits near the no-overlap baseline at $\rho = 0.96$, $\beta = 0.94$.

\begin{table}[h]
\centering
\small
\caption{Rank overlap across pools at $\tau = 0.99$. $\rho = D / \sum_i r_i$ (lower means more overlap). $\beta$ is the power-law exponent of $D$ vs.\ $K$. The LoRA-Hub row is a $K{=}48$ sub-sample matched in size to LoRA-Retriever (full 196-LoRA pool in App.~\ref{app:llm_d_ablation}).}
\label{tab:rank_fill}
\begin{tabular}{lrrrrc}
\toprule
Pool & $K$ & $\sum_i r_i$ & $D$ (mean) & $\rho$ & $\beta$ \\
\midrule
Pool A (SD\,1.5)                           & 11 &  784 & 446 & 0.57 & 0.43 \\
Pool B (SD\,1.5)                           & 17 & 1112 & 521 & 0.47 & 0.49 \\
Pool C (SDXL)                              & 17 & 1792 & 819 & 0.46 & 0.55 \\
\midrule
LoRA-Hub (FLAN-T5-Large)                   & 48 &  768 & 474 & 0.62 & 0.88 \\
LoRA-Retriever (LLaMA-2-7B)                & 48 &  384 & 368 & 0.96 & 0.94 \\
\bottomrule
\end{tabular}
\end{table}

The source of the low $\rho$ differs between regimes. We split $\rho = \rho_{\mathrm{intra}} \times \rho_{\mathrm{cross}}$ into within-LoRA rank slack and a cross-LoRA term, and compare $D$ with a spectrum-matched random null (App.~\ref{app:rho_decomposition}). Within-LoRA slack is nowhere dominant ($\rho_{\mathrm{intra}} \ge 0.84$ on every pool). The related-task LoRA-Hub pool~\citep{huang2024lorahub} needs about $30\%$ fewer dimensions than its matched null, an alignment associated with shared task structure, while the diversity-curated LoRA-Retriever sits at its null. On the diffusion pools the stacked ranks exceed the layer width on most layers, and on the SDXL pools $D$ sits at its null ($0.98$ on Pool C), so their low $\rho$ is real compressibility that is largely explained by width saturation and per-adapter spectra, and we leave its link to training data open. Pool composition, not modality, predicts where compression is large and where it is not. The framework also predicts a saturation regime when $\sum_i r_i \ge d$: the full LoRA-Hub pool ($K{=}196$) crosses the threshold at $K^\star = d/r = 64$, with each new LoRA adding ${\sim}11$ dimensions to $\bar{D}$ below threshold and ${\sim}1$ above (App.~\ref{app:llm_d_ablation}).

\subsection{Quality, speed, and memory}
\label{sec:experiments:quality}

We compare three composition strategies on Pool A: ambient static merge with weight $1$ per LoRA, so the deltas are summed, ambient composite with $N$ forward passes per step, and \LoRASpace{} with uniform $1/N$ routing weights. Table~\ref{tab:main_quality} reports CLIP-T and ImageReward averaged across three random seeds ($240$ combinations per seed, $N \in \{2,3,4,5\}$), and the GPT-4o rubric from one seed (the position-bias-averaged GPT-4o evaluation is expensive; three-seed consistency on CLIP-T and ImageReward suffices to verify robustness).

\begin{table}[h]
\centering
\small
\caption{Pool A quality (mean $\pm$ std over 3 seeds, $240$ combos/seed, averaged over $N \in \{2, \dots, 5\}$, with GPT-4o from one seed). Higher is better for all metrics.}
\label{tab:main_quality}
\begin{tabular}{lrrrr}
\toprule
Method & CLIP-T $\uparrow$ & ImageReward $\uparrow$ & GPT-4o Total $\uparrow$ & GPT-4o Win\% $\uparrow$ \\
\midrule
merge                  & $33.60 \pm 0.19$ & $0.760 \pm 0.008$ & 14.32 & 21.9 \\
composite              & $35.27 \pm 0.19$ & $0.953 \pm 0.011$ & 15.24 & 29.5 \\
\LoRASpace{} (uniform) & $\mathbf{36.01 \pm 0.16}$ & $\mathbf{1.041 \pm 0.035}$ & \textbf{15.61} & \textbf{48.6} \\
\bottomrule
\end{tabular}
\end{table}

\LoRASpace{} with uniform routing exceeds composite by $+0.74$ CLIP-T, $+0.09$ ImageReward, $+0.37$ GPT-4o total, and almost twenty percentage points of GPT-4o win rate ($48.6\%$ vs.\ $29.5\%$, with $33.3\%$ being the no-preference baseline for three competitors). It also exceeds ambient merge by larger margins. The inter-method gap exceeds the across-seed standard deviation on every metric.

\begin{figure}[!htbp]
\centering
\includegraphics[width=0.38\linewidth]{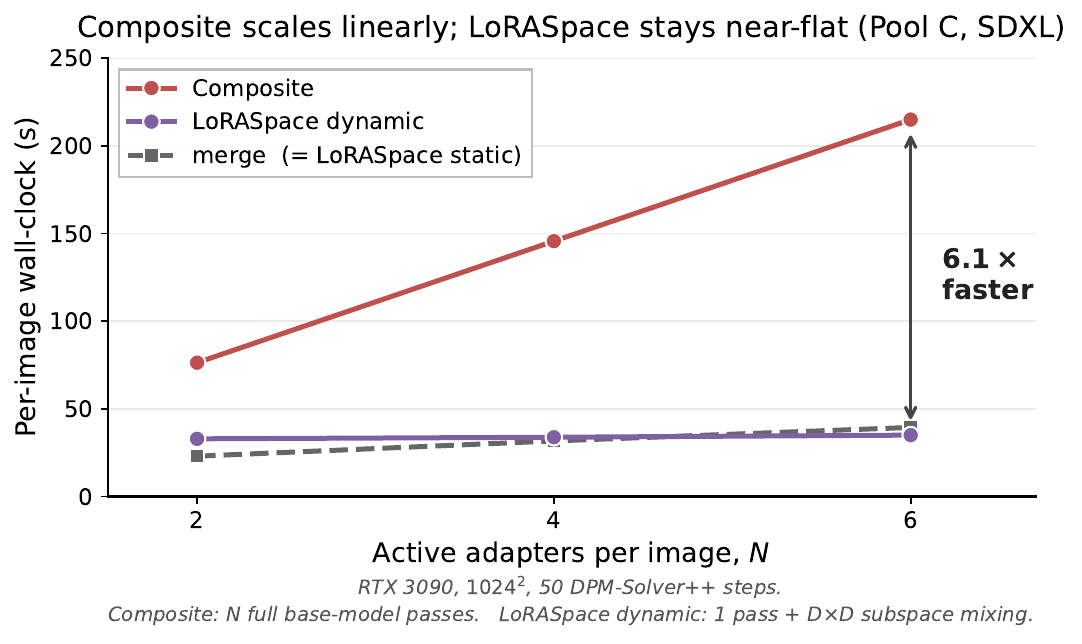}
\hfill
\includegraphics[width=0.38\linewidth]{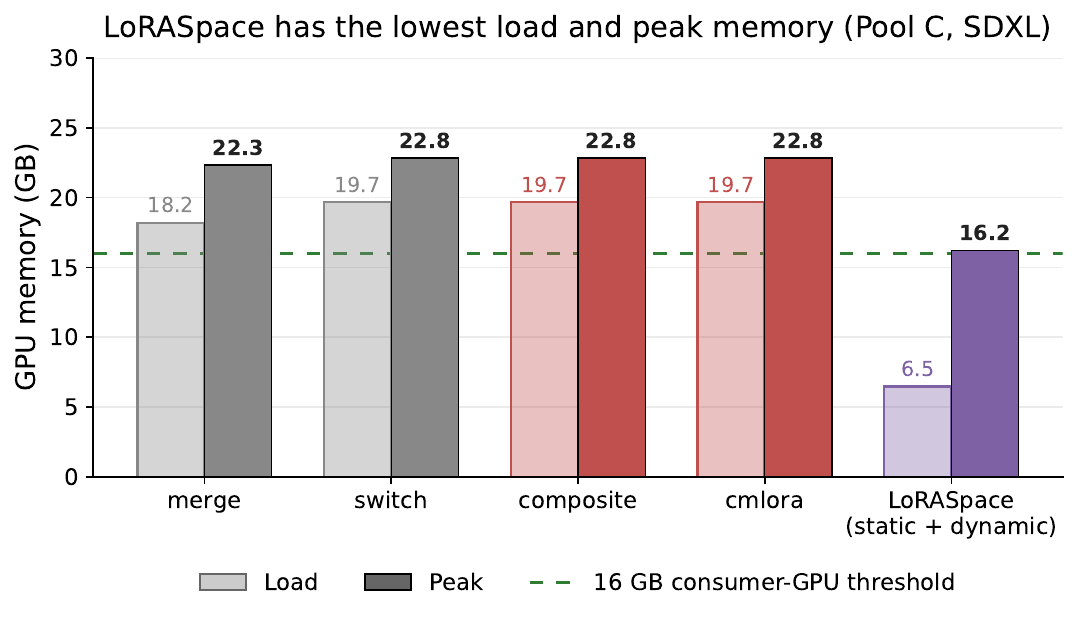}
\caption{\textbf{Pool C (SDXL) speed and memory.} \textit{Left:} per-image wall-clock vs.\ $N$. Composite scales linearly. Dynamic \LoRASpace{} stays near-flat (\textbf{$6.1\times$ speedup at $N{=}6$}). Static \LoRASpace{} inherits ambient-merge cost. \textit{Right:} GPU memory. \LoRASpace{} has the lowest load and peak memory ($6.5$/$16.2$\,GB). PEFT baselines (merge, switch, composite, cmlora) exceed $22$\,GB peak. RTX\,3090, $1024^2$, 50 DPM-Solver++ steps.}
\label{fig:speed_memory}
\end{figure}

\paragraph{Mechanism ablation.}
We test whether the quality lift depends on the pool-wide basis with two static-delta variants at the same rank $D$. \texttt{truncated-merge} is the combo-specific best-fit SVD of the ambient merge; \texttt{subspace-merge} bakes the pool-wide $(U, V)$ readout into a fixed delta per combo. Both run at ambient-merge per-image cost. On Pool A (seed $111$, $240$ combos, $N{=}2$ to $5$), both match per-step \LoRASpace{}-uniform within fp16 noise (Tab.~\ref{tab:mechanism_app}). At equal rank $D$ the basis choice does not affect quality. The lift over ambient merge ($+2.40$ CLIP-T, $+0.32$ ImgRew) is therefore not a property of the pool-wide basis. Both variants average the LoRAs with weights $1/N$ where merge sums them, and at $\tau = 0.99$ the truncation changes the composed delta by under $1\%$ on Pool C (App.~\ref{app:joint_basis}).

\paragraph{Why the pool-wide basis.}
\emph{Amortisation}: \texttt{truncated-merge} requires $192$ layer SVDs per combo (${\sim}1.4$\,h across Pool A's $1011$ combos) vs.\ one $21.8$\,s pool-wide precompute (App.~\ref{app:pool_c_speed}). \emph{Dynamic-mode}: when $\{w_i\}$ depend on context, no single static delta exists for \texttt{truncated-merge}, while $(U, V)$ supports both modes from one precompute. At about $5$\,s of SVDs per combination, the precompute pays off beyond about five distinct combinations, so per-combination truncated merge remains the better choice for a static deployment with a few fixed combinations. Serving $30$ combinations from the $37$-adapter union of App.~\ref{app:heterogeneous} averages $35.1$\,s per combination including the one-time load, against $43.9$\,s for repeated PEFT load and merge. Keeping all adapters resident instead needs per-adapter format fixes, and even then $36$ of the $37$ load at $19.0$\,GiB and one $1024^2$ generation peaks at $23.1$ of the card's $24$\,GiB.

\paragraph{Pool C (speed, memory, quality).}
Fig.~\ref{fig:speed_memory} summarises per-image cost. Dynamic-mode stays near-flat in $N$ ($6.1\times$ faster than composite at $N{=}6$). Most of this gain comes from running one backbone pass instead of $N$. At $N{=}5$ a single-pass ambient mixture already accounts for $4.9\times$ of the $5.3\times$, and the substrate path adds $1.08\times$ (Tab.~\ref{tab:latency_split}). Static-mode inherits ambient merge cost ($23$ to $39$\,s, Tab.~\ref{tab:speed_memory_full}). Across $200$ combos uniform \LoRASpace{} scores $\mathbf{32.89}$ CLIP-T / $\mathbf{1.040}$ ImgRew vs.\ composite ($31.91$ / $0.973$) and merge ($29.41$ / $0.721$), with the CLIP-T gap over composite growing from $+0.17$ at $N{=}2$ to $+1.37$ at $N{=}6$ (Tab.~\ref{tab:pool_c_quality}, additional qualitative examples in App.~\ref{app:qualitative_extra}).

\begin{figure}[t]
\centering
\includegraphics[width=0.21\linewidth]{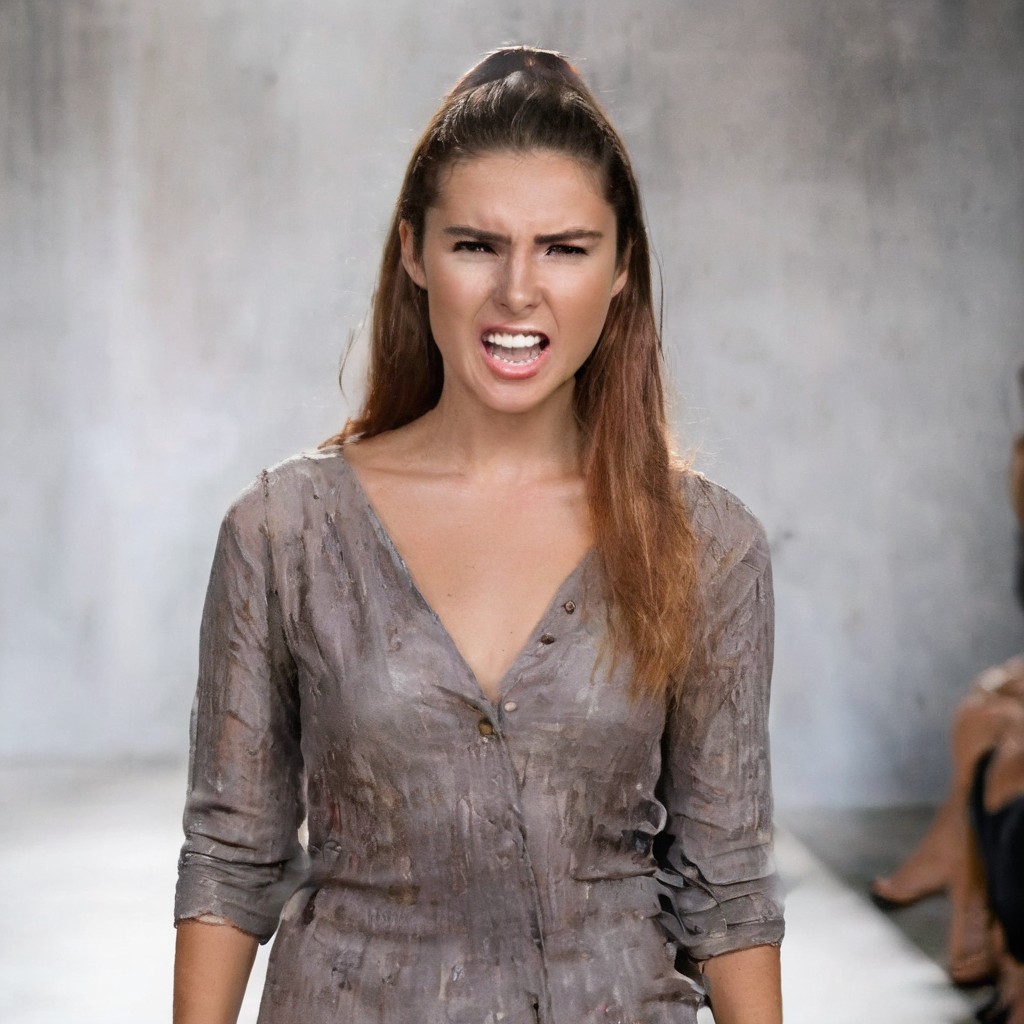}\hspace{6pt}%
\includegraphics[width=0.21\linewidth]{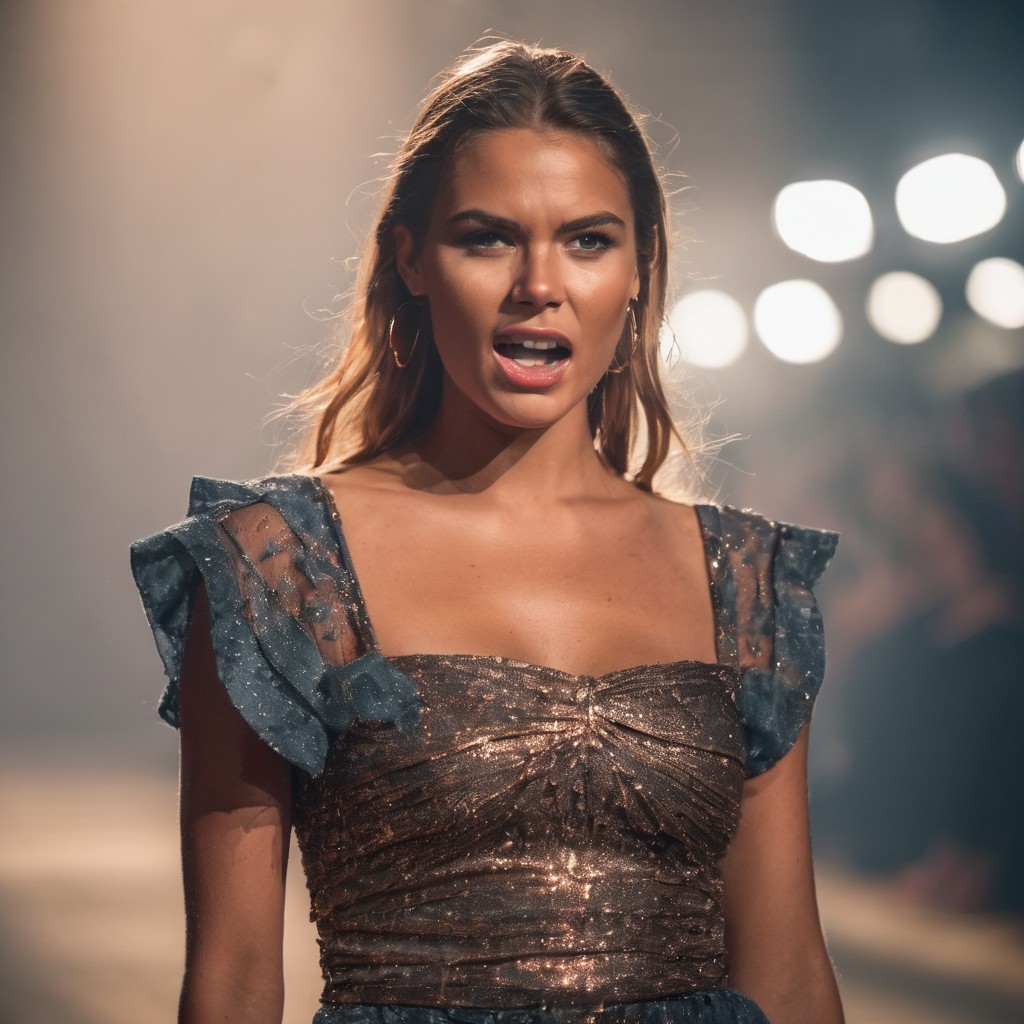}\hspace{6pt}%
\includegraphics[width=0.21\linewidth]{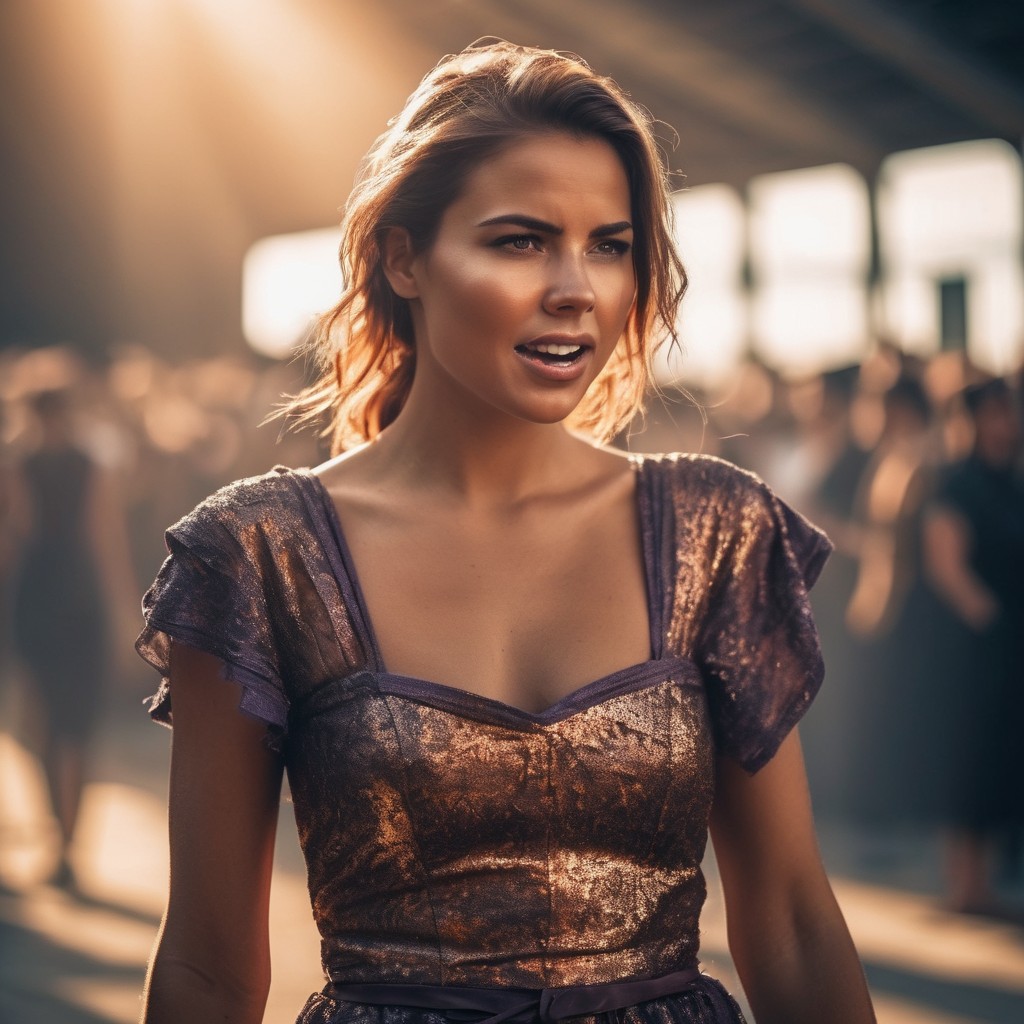}
\caption{\textbf{Pool C $N{=}5$ example.} VioletXL + catwalk + god rays + old copper + anger, SDXL. \emph{L$\to$R}: \texttt{merge}, \texttt{composite}, \LoRASpace{}. Time $36/187/42$\,s.}
\label{fig:qualitative}
\end{figure}

\subsection{Energy threshold ablation}
\label{sec:experiments:energy}

Sweeping $\tau \in \{0.95, 0.97, 0.99, 0.999\}$ on Pool A grows $\bar{D}$ by $43\%$ while quality varies non-monotonically within seed-level noise ($|\Delta| < 0.05$ CLIP-T, $|\Delta| < 0.008$ ImageReward). We adopt $\tau = 0.99$ as the default (Tab.~\ref{tab:abl_tau_poola}).

\subsection{Routing ablation: substrate is router-tolerant}
\label{sec:experiments:routing}

On Pool A across three seeds, the four routers from \S\ref{sec:method:routing} (uniform, spatial, temporal, semantic) all exceed the merge baseline by large margins (Tab.~\ref{tab:routing_abl}, App.~\ref{app:routing_full}). Temporal gives the largest ImageReward gain over uniform, semantic is flat, and spatial regresses. The substrate carries the quality, with routers contributing small modulations.

\subsection{Cross-modality predictive check on the LLM regime}
\label{sec:experiments:llm}

On the near rank-orthogonal LoRA-Retriever LLaMA-2-7B benchmark ($K{=}48$, $\rho = 0.96$), the substrate alone (\texttt{loraspace-fullpool}) scores $50.49$ Macro Task vs.\ the ambient mixture's $53.55$. INSTRUCTOR-retrieval coupling (\texttt{loraspace-hybrid}) recovers $\mathbf{54.51}$ at $1.52$ vs $1.47$ samp/s, matching the framework's prediction that compression helps only with retrieval coupling (full scoreboard in App.~\ref{app:llm_full}).

\section{Conclusion}
\label{sec:conclusion}

We empirically characterise pool-wide rank overlap in real LoRA collections, finding $\rho \in [0.46, 0.62]$ across three diffusion pools and a 48-LoRA LoRA-Hub sub-pool, with LoRA-Retriever as the near-orthogonal exception ($\rho \approx 0.96$). \LoRASpace{} turns this overlap into a pool-wide substrate from one SVD, matching composite quality at merge cost in static-mode and reaching $2.3$ to $6.1\times$ speedup in dynamic-mode, mostly from avoiding per-LoRA backbone passes. A mechanism ablation shows that the lift does not depend on the pool-wide basis, making the basis valuable for amortisation and dynamic capability (limitations and broader impact in App.~\ref{app:limitations}).

\paragraph{When to use \LoRASpace{}.}
\LoRASpace{} targets deployments that repeatedly compose several LoRAs under a memory budget. With one active adapter or unconstrained memory, ambient serving is the right choice. Within that setting, $\rho$ is a cheap screen (CPU minutes, no base model). Below about $0.6$ we deploy the substrate, as on the three diffusion pools, with LoRA-Hub ($0.62$) at the boundary. Above about $0.9$, as on LoRA-Retriever, the substrate is worth using for routing or retrieval coupling rather than for compression at the default $\tau$. In between, a $\tau$ sweep on held-out prompts decides (CLIP-T drop under $0.5$ and structural similarity at least $0.95$). These cut points separate our pools empirically and are not universal constants, and $\rho$ falls with pool size ($0.95$ at $K{=}2$ to $0.57$ at $K{=}11$, App.~\ref{app:rank_ablation}), so very small pools are better served ambiently. Uncurated and mixed pools follow the same rule (App.~\ref{app:heterogeneous}).

\begin{ack}
The computational resources for this research were supported by the Accelerating Higher Education Expansion and Development (AHEAD) Operation Grant No. 6026-LK/8743-LK from the Ministry of Higher Education, Sri Lanka, funded by the World Bank. R. Rodrigo acknowledges the University of Moratuwa Senate Research Committee Grant No. SRC/LT/2021/20. We also thank Jaseci Labs for its support.
The authors declare no competing interests.
\end{ack}

\bibliographystyle{plainnat}
\bibliography{references}

\appendix

\section{Pool-wide precompute illustration}
\label{app:precompute_fig}

\begin{figure}[h]
\centering
\includegraphics[width=0.85\linewidth]{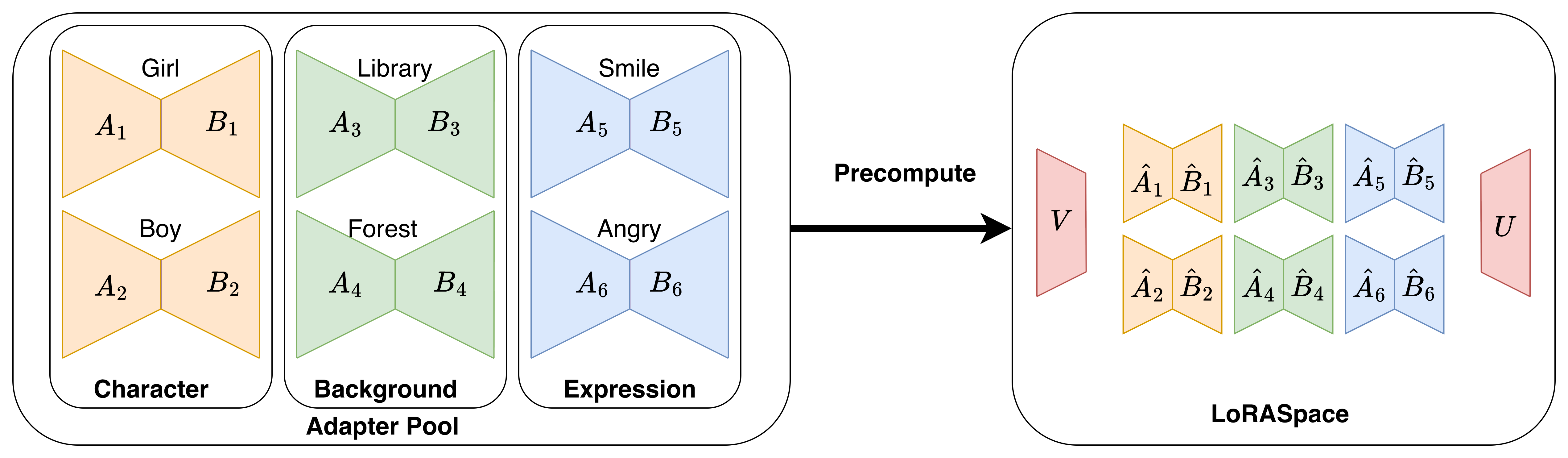}
\caption{Pool-wide precompute (paid once at pool-creation time). Per-LoRA factors are stacked and decomposed via paired marginal SVDs into shared bases $(U, V)$ (red) plus projected factors $(\hat B_i, \hat A_i)$. The cache is reused across all subsequent compositions.}
\label{fig:precompute}
\end{figure}

\section{Proof of Proposition~\ref{prop:exact_recovery}}
\label{app:proof_prop1}

\begin{proof}
Under the span conditions, $UU^\top B_i = B_i$ (because $U U^\top$ is the identity on $\mathrm{col}(U) \supseteq \mathrm{col}(B_i)$) and $A_i V V^\top = A_i$ (analogously).
Substituting Eq.~\eqref{eq:projected_factors} into Eq.~\eqref{eq:loraspace_operator}:
\begin{align*}
\Delta y^{\mathrm{LS}}(x)
  &= U \sum_i w_i \bigl(\sqrt{s_i}\, U^\top B_i\bigr)\bigl(\sqrt{s_i}\, A_i V\bigr) V^\top x \\
  &= \sum_i w_i s_i (UU^\top B_i)(A_i VV^\top) x
   = \sum_i w_i s_i B_i A_i x = \Delta y^{\star}(x). \qedhere
\end{align*}
\end{proof}

\begin{proof}[Proof of Corollary~\ref{cor:truncation_bound}]
Let $P = UU^\top$ and $Q = VV^\top$ be the projectors onto $\mathrm{col}(U)$ and $\mathrm{col}(V)$.
For each LoRA $i$, decompose $B_i = PB_i + E^B_i$ with $E^B_i = (I-P)B_i$, and $A_i = A_iQ + E^A_i$ with $E^A_i = A_i(I-Q)$.
Then
\begin{align*}
\Delta y^{\mathrm{LS}}(x) - \Delta y^{\star}(x)
  &= \sum_i w_i s_i \bigl[(PB_i)(A_iQ) - B_iA_i\bigr]x \\
  &= \sum_i w_i s_i \bigl[-B_i E^A_i - E^B_i A_i + E^B_i E^A_i\bigr]x.
\end{align*}
By the Eckart-Young theorem~\citep{eckart1936}, $\|(I-P)\bar{B}_{\mathrm{stack}}\|_2 = \sigma^B_{D+1}$.
Since $\bar{B}_i = B_i/\|B_i\|_F$ is a column-block of $\bar{B}_{\mathrm{stack}}$, we have $\|E^B_i\|_2 \le \|B_i\|_F \cdot \sigma^B_{D+1}$, and analogously $\|E^A_i\|_2 \le \|A_i\|_F \cdot \sigma^A_{D+1}$.
Applying $\|MNx\|_2 \le \|M\|_2\|N\|_2\|x\|_2$, bounding $\|A_i\|_2 \le \|A_i\|_F \le \mu^A$ and $\|B_i\|_2 \le \|B_i\|_F \le \mu^B$, summing the three error terms, and collecting over $i$ yields Eq.~\eqref{eq:truncation_bound}.
\end{proof}

\paragraph{Empirical anchor (SDXL Pool C, $\tau=0.99$).}
At $\tau=0.99$ with $K=17$ pool LoRAs, the energy-fraction definition gives $\sigma^B_{D+1} \le \sqrt{(1-\tau)K} \approx 0.41$. The corollary applies a Frobenius-norm chain on $\|A_i\|, \|B_i\|$ that loses orders of magnitude in tightness; we therefore treat Cor.~\ref{cor:truncation_bound} as a worst-case sanity statement rather than a predictive bound. Empirical robustness of $\tau=0.99$ is established by the energy-threshold sweep in \S\ref{sec:experiments:energy}, where quality varies non-monotonically within seed-level noise across $\tau \in [0.95, 0.999]$.

\section{Router specifications}
\label{app:router_specs}

The four router variants summarised in \S\ref{sec:method:routing} are detailed below. All share the same operator interface: a function producing per-LoRA weights $\{w_i\}$ that plug into Eq.~\eqref{eq:loraspace_operator}.

\paragraph{Spatial router (per-token activation energy).} Per-position normalised energy $e_i(p) = \|y_i(p)\|_2^2 / (\|x_{\mathrm{proj}}(p)\|_2^2 + \epsilon)$ scored across active LoRAs, top-$k$ selection, temperature softmax. All runs use $k{=}1$, so each token is routed to its highest-energy LoRA. Spirit of SpectR~\citep{fleshman2025spectr} but applied over per-LoRA contributions inside the pool subspace.

\paragraph{Temporal router (per-step subspace-band).} Following CMLoRA's dominant/non-dominant scheme~\citep{zou2025cmlora} but with the routing signal computed as the late-band energy fraction inside the shared subspace, $m = \sum_{d \ge D/2}\|x_{\mathrm{proj}}[\cdot, \cdot, d]\|^2 / \sum_d \|x_{\mathrm{proj}}[\cdot, \cdot, d]\|^2$. Signal and operator share the same coordinate system.

\paragraph{Semantic router (per-concept spatial mask).} A short warmup pass collects cross-attention maps for each LoRA's trigger tokens, following the token-to-region routing motivation of FreeFuse~\citep{freefuse2025}. Resulting per-LoRA masks $M_i \in [0,1]^{H \times W}$ are bilinearly resampled and applied at image-token axes, with activation-energy fallback at text-token axes.

\paragraph{External retrieval coupling.} For LLM pools, \LoRASpace{} pairs with an INSTRUCTOR retriever~\citep{su2023instructor} following the LoRA-Retriever protocol~\citep{zhao2024loraretriever}. The retrieved top-$K$ subset is composed through the same shared-subspace operator. This demonstrates that internal (activation- or band-derived) and external (retrieval-derived) routing policies share a single composition substrate.

\section{Per-method complexity comparison}
\label{app:complexity}

Table~\ref{tab:complexity_per_method} consolidates per-step compute and peak GPU memory across all evaluated methods.
The key comparison is the \emph{Scaling in $N$} column: output-space methods (\texttt{composite}, \texttt{cmlora}) scale linearly in compute and memory; PEFT-resident single-active methods (\texttt{switch}) hold per-step compute roughly constant but materialize all $K_{\mathrm{pool}}$ LoRAs on GPU regardless of how many are active per step; static-merge methods (\texttt{merge}, \texttt{knots}) achieve $O(1)$ inference cost but lose routing capability; \LoRASpace{} scales with the subspace dimension $D \le \min(\sum_i r_i, d)$, which saturates at the layer width when the pool grows large enough.

\begin{table}[h]
\centering
\small
\caption{Per-step compute and peak GPU memory of training-free multi-LoRA composition methods. $N$ is active LoRAs per call, $\bar{r}$ is representative rank, $d$ is layer width, $L$ is LoRA-bearing layers, $S$ is tokens per layer, and $K_{\mathrm{pool}}$ is pool size. \LoRASpace{}'s $D \le \min(\sum_i r_i, d)$ saturates at the layer width.}
\label{tab:complexity_per_method}
\resizebox{\textwidth}{!}{%
\begin{tabular}{lccc}
\toprule
Method & Per-step compute & Peak GPU memory & Scaling in $N$ \\
\midrule
\texttt{merge}                       & $O(L d^2)$                        & $O(L d^2)$                    & $\Theta(1)$ \\
\texttt{switch}                      & $O(L (d^2 + \bar{r} d))$          & $O(L(d^2 + K_{\mathrm{pool}} \bar{r} d))$ & compute $\Theta(1)$, mem $\Theta(K_{\mathrm{pool}})$ \\
\texttt{composite}                   & $O(N L d^2 S)$                    & $O(L(d^2 + K_{\mathrm{pool}} \bar{r} d))$ & both $\Theta(N)$ \\
\texttt{cmlora}                      & $O(N L d^2 S)$                    & $O(L(d^2 + K_{\mathrm{pool}} \bar{r} d))$ & both $\Theta(N)$ \\
\texttt{knots}                       & $O(L d^2)$                        & $O(L d^2)$                    & $\Theta(1)$ \\
\midrule
\textbf{\LoRASpace{}}
  & $O(L (d D + N \bar{r} D) S)$, $D \le \min(\textstyle\sum_i r_i, d)$
  & $O(L(d^2 + dD))$ GPU $+$ $O(L K_{\mathrm{pool}} D \bar{r})$ CPU
  & linear in $N$ inside subspace; $D$ caps with $K$ \\
\bottomrule
\end{tabular}}
\end{table}

\paragraph{Saturation thresholds.}
Saturation ($D = d$) requires $\sum_i r_i \ge d$, where $d = \min(\din, \dout)$ is the width of each layer.
SD\,1.5 LoRA layers have widths from $320$ to $1280$, and SDXL layers $640$ or $1280$.
Counted per layer, the stacked ranks of Pool C reach the layer width on all $722$ layers, and those of Pools A and B on $132$ of $192$ layers (every layer narrower than $1280$).
The diffusion pools therefore sit in the width-saturated regime on most layers, consistent with $D$ matching the spectrum-matched null on the SDXL pools (App.~\ref{app:rho_decomposition}).
The LLM pool has $\sum r_i = 384$ against $d = 4096$ ($9\%$ of threshold), deep in the rank-bound regime.

\section{Rank overlap ablation: $D$ vs.\ pool size and energy threshold}
\label{app:rank_ablation}

This appendix documents the per-pool subspace dimension ablation that underlies Table~\ref{tab:rank_fill}.

\paragraph{Energy threshold sweep (Pool A, SD\,1.5, $K=11$).}

\begin{table}[h]
\centering
\small
\caption{Pool A energy-threshold ablation. Subspace dimension $D$ and uniform-\LoRASpace{} quality at four $\tau$ values (mean over $N \in \{2, \dots, 5\}$, $240$ combos, seed $111$). Quality varies non-monotonically within seed-level noise while $D$ grows by $1.43\times$.}
\label{tab:abl_tau_poola}
\begin{tabular}{rrrrrrr}
\toprule
$\tau$ & $D_{\min}$ & $D$ (mean) & $D_{\max}$ & CLIP-T & ImageReward \\
\midrule
$0.95$  & 224 & 356 & 568 & 35.76 & 1.071 \\
$0.97$  & 248 & 392 & 611 & 35.81 & 1.072 \\
$0.99$  & 283 & 446 & 663 & 35.77 & 1.064 \\
$0.999$ & 314 & 508 & 741 & 35.80 & 1.067 \\
\bottomrule
\end{tabular}
\end{table}

\paragraph{Pool size sweep (Pool A, $\tau=0.99$).}

\begin{table}[h]
\centering
\small
\caption{Pool A rank-fill ratio $\rho$ vs.\ pool size $K$ (mean $\pm$ std over $3$ random subsets per $K$; $K{=}11$ has zero std as the full pool admits one subset). Pairs ($K{=}2$) are nearly rank-orthogonal ($\rho \approx 0.95$); at full pool, $43\%$ of rank is redundant.}
\label{tab:abl_kpool_poola}
\begin{tabular}{rrrr}
\toprule
$K$ & $\sum_i r_i$ (mean) & $D$ (mean $\pm$ std) & $\rho$ \\
\midrule
 2  & 224 & $213 \pm 50$ & 0.95 \\
 4  & 304 & $275 \pm 22$ & 0.91 \\
 6  & 456 & $353 \pm 35$ & 0.77 \\
 8  & 584 & $397 \pm 38$ & 0.68 \\
11  & 784 & $446 \pm 0$  & 0.57 \\
\bottomrule
\end{tabular}
\end{table}

\paragraph{SDXL Pool C vs.\ LLM comparison.}
Pool C ($K=17$, SDXL) reaches $\rho = 0.46$ at full pool size, with the rank-fill already at $0.69$ to $0.72$ for $K{=}4$ subsets.
This heavier overlap relative to Pool A is consistent with Pool C's higher proportion of same-type LoRAs (lighting, material, background) that share scene-element structure.
The LLM pool ($K=48$, LLaMA-2-7B~\citep{touvron2023llama2}) maintains $\rho \ge 0.96$ at all tested pool sizes with growth exponent $\beta = 0.94$, which is effectively linear and confirms near-zero rank overlap across disjoint NLP-task LoRAs.

\section{Rank-fill decomposition and matched null}
\label{app:rho_decomposition}

The rank-fill ratio $\rho = D / \sum_i r_i$ has nominal ranks in the denominator, so it mixes each adapter's internal rank slack with structure between adapters. With per-adapter effective ranks $r_i^{\mathrm{eff}}$ computed at the same $\tau$ on the same Frobenius-normalised factor stacks as $D$,
\begin{equation}
\rho = \underbrace{\frac{\sum_i r_i^{\mathrm{eff}}}{\sum_i r_i}}_{\rho_{\mathrm{intra}}} \times \underbrace{\frac{D}{\sum_i r_i^{\mathrm{eff}}}}_{\rho_{\mathrm{cross}}},
\end{equation}
where $\rho_{\mathrm{intra}}$ isolates within-LoRA redundancy. Both cross-LoRA structure and stacked ranks approaching the layer width lower $\rho_{\mathrm{cross}}$, so we also compare $D$ with a spectrum-matched random null $D_{\mathrm{null}}$, built from random per-adapter subspaces that keep each block's own singular spectrum, with the same layer shapes and widths (one seeded draw per layer, averaged over all layers). Tab.~\ref{tab:rho_decomposition} reports every pool at $\tau = 0.99$.

\begin{table}[H]
\centering
\small
\caption{Rank-fill decomposition at $\tau = 0.99$. $D / D_{\mathrm{null}} < 1$ means the pool needs fewer dimensions than a spectrum-matched random pool. The LoRA-Hub row uses a different 48 of the 196 adapters than Tab.~\ref{tab:rank_fill}, and LoRA-Retriever is an independent re-download. The stylized, community and wild pools are described in App.~\ref{app:heterogeneous}.}
\label{tab:rho_decomposition}
\begin{tabular}{lcccc}
\toprule
Pool & $\rho$ & $\rho_{\mathrm{intra}}$ & $\rho_{\mathrm{cross}}$ & $D / D_{\mathrm{null}}$ \\
\midrule
Pool C (SDXL, $K{=}17$) & 0.46 & 0.84 & 0.55 & 0.98 \\
Six stylized SDXL LoRAs ($K{=}6$) & 0.97 & 1.00 & 0.97 & 1.00 \\
Community SDXL, uncurated ($K{=}20$) & 0.87 & 0.99 & 0.88 & 1.00 \\
Pool C + community union ($K{=}37$) & 0.43 & 0.89 & 0.48 & 0.99 \\
\midrule
LoRA-Hub (FLAN-T5, $K{=}48$) & 0.60 & 1.00 & 0.60 & 0.70 \\
Wild FLAN-T5 LoRAs, uncurated ($K{=}12$) & 0.84 & 1.00 & 0.85 & 0.92 \\
LoRA-Hub + wild union ($K{=}60$) & 0.52 & 1.00 & 0.52 & 0.67 \\
LoRA-Retriever (LLaMA-2, $K{=}48$) & 0.98 & 1.00 & 0.98 & 1.00 \\
\bottomrule
\end{tabular}
\end{table}

Within-LoRA slack is material only on the two pools that contain the Pool C adapters ($\rho_{\mathrm{intra}} = 0.84$ and $0.89$) and is $0.99$ to $1.00$ elsewhere. On the LLM pools, far from width saturation, the related-task pools need about $30\%$ fewer dimensions than their matched nulls ($0.70$ for LoRA-Hub and $0.67$ for its union with an uncurated harvest), while LoRA-Retriever sits at its null. This alignment is consistent with shared task structure, but the control does not isolate training data as the cause. On the SDXL pools, whose stacked ranks exceed the layer widths, $D$ sits at the null ($0.98$ to $1.00$). Their low $\rho$ is real compressibility, largely explained by width saturation and per-adapter spectra rather than by measured alignment, and it does not depend on a curated or related pool. Repeating the decomposition at the compressed operating points of App.~\ref{app:joint_basis} instead of $\tau = 0.99$ attributes $75$ to $83\%$ of the log-compression to within-adapter tails on both pool types tested.

\section{Joint versus marginal bases and compressed thresholds}
\label{app:joint_basis}

\paragraph{Joint versus marginal SVD.}
We compare the paired marginal bases of \S\ref{sec:method:basis} with bases from a joint SVD of the stacked scaled deltas $s_i B_i A_i$, at the same rank $D$ (the $D$ the marginal method selects at $\tau = 0.99$), so capacity is equal. Tab.~\ref{tab:joint_vs_marginal} reports relative Frobenius reconstruction error of the per-LoRA deltas and of composed uniform $N{=}5$ deltas on Pool C, on $24$ layers spanning nine layer types across the down, mid and up blocks, then on all $722$ layers.

\begin{table}[H]
\centering
\small
\caption{Reconstruction error of marginal and joint bases at equal $D$ (Pool C).}
\label{tab:joint_vs_marginal}
\begin{tabular}{lccccc}
\toprule
 & & \multicolumn{2}{c}{Per-LoRA error} & \multicolumn{2}{c}{Composed error ($N{=}5$)} \\
\cmidrule(lr){3-4}\cmidrule(lr){5-6}
Setting & $D$ & marginal & joint & marginal & joint \\
\midrule
Full pool, $K{=}17$, 24 layers & 822 & 0.88\% & 0.35\% & 0.84\% & 0.43\% \\
Small pool, $K{=}6$, 24 layers & 374 & 0.52\% & 0.22\% & 0.56\% & 0.32\% \\
Full pool, $K{=}17$, all 722 layers & per layer & 1.01\% & 0.43\% & 0.95\% & 0.53\% \\
\bottomrule
\end{tabular}
\end{table}

The joint construction is closer to product-optimal in weight space, with worst cases of a few percent on low-energy layers ($8.7\%$ marginal against $7.3\%$ joint). The difference does not reach generation. A cache built from the joint bases at the same per-layer $D$ regenerates all $30$ benchmark combinations at DINO similarity $0.999$ to the marginal-cache images (CLIP-T $25.87$ against $25.92$). The joint spectrum is also more compact ($D \approx 148$ on these layers against $822$). A joint core $U^\top (s_i B_i A_i) V$ has rank at most $r_i$ and refactors exactly into thin factors, so at matched $D$ both caches store the same number of parameters, while at its own $\tau = 0.99$ the joint cache is about $5\times$ smaller ($1.77$ against $8.63$\,GB in fp16) and builds $14\%$ faster.

We keep the marginal bases as the default because they can be built and grown from native thin factors in a small memory budget. The marginal build streams thin factor stacks and never forms a dense $\dout \times \din$ delta, so its peak transient memory per layer is $83$\,MB against $1.3$\,GB for our joint build, which keeps the precompute on CPU ($21.8$\,s for Pool A). The marginal coordinates also support the append-only extension of App.~\ref{app:dynamic_pools} and the exact recovery of Prop.~\ref{prop:exact_recovery}. The joint substrate is the more compact choice when the dense build is affordable, and the marginal one suits pools that change.

\paragraph{Compressed thresholds.}
Tab.~\ref{tab:tau_compressed} lowers $\tau$ below the default after submission ($105$ images, $15$ combinations at $N{=}5$, an $18$-adapter manifest with mean $D = 785$ at $\tau = 0.99$, and our own prompt set, so CLIP-T is not comparable to the main tables). These compressed caches, and those of the next paragraph, are made by truncating the $\tau = 0.99$ cache to $\min(D_B, D_A)$ per layer rather than the $\max(D_B, D_A)$ of \S\ref{sec:method:basis}, so at a given $\tau$ they are no larger than a fresh build. CLIP-T and ImageReward stay flat within run-to-run variation down to $D \approx 50$, the same order as the joint spectrum's $D$.

\begin{table}[H]
\centering
\small
\caption{Quality at compressed energy thresholds. DINO is the similarity to the $\tau = 0.99$ images.}
\label{tab:tau_compressed}
\begin{tabular}{ccccc}
\toprule
$\tau$ & mean $D$ & CLIP-T & ImageReward & DINO \\
\midrule
0.99 & 785 & 25.33 & 1.137 & 1.000 \\
0.70 & 107 & 25.67 & 1.158 & 0.981 \\
0.50 & 49 & 25.61 & 1.148 & 0.967 \\
0.30 & 22 & 25.27 & 1.117 & 0.949 \\
\bottomrule
\end{tabular}
\end{table}

\paragraph{The basis matters at compressed sizes.}
At $\tau = 0.99$ the choice of basis does not change quality (\S\ref{sec:experiments:quality}). At compressed sizes it does. Regenerating $30$ fixed-seed combinations at identical per-layer $D$ with four bases at $\tau = 0.50$ ($D \approx 49$, Pool C cache $0.63$\,GB against $9.5$\,GB), the pool basis keeps DINO similarity $0.967$ to the full-cache images (CLIP-T $25.93$ against $25.92$), while a random basis and a basis built from a different pool fall to $0.877$ and $0.901$ because the adapters no longer come through. A per-combination Eckart-Young basis also holds quality ($0.986$) but costs minutes of dense SVDs per combination and leaves no reusable object. The pool basis keeps $0.956$ and $0.945$ at $\tau = 0.40$ and $0.30$ (caches of $0.40$ and $0.26$\,GB), while both controls stay at $0.88$ to $0.90$.

\section{Heterogeneous and uncurated pools}
\label{app:heterogeneous}

Most of our pools are earlier benchmarks adopted unchanged. To test a pool nobody on our team chose, we harvested the $20$ most-downloaded valid community SDXL LoRAs by a fixed rule (Hugging Face SDXL tree, downloads descending, $16$ creators, ranks $4$ to $128$) and measured them on the same $722$ layers as Pool C. We also added six strongly stylized SDXL LoRAs. Tab.~\ref{tab:heterogeneous} reports $\rho$ at $\tau = 0.99$.

\begin{table}[H]
\centering
\small
\caption{Rank fill of heterogeneous and uncurated SDXL pools at $\tau = 0.99$.}
\label{tab:heterogeneous}
\begin{tabular}{lrrrr}
\toprule
Pool & $K$ & $\sum_i r_i$ & $D$ & $\rho$ \\
\midrule
Pool C (control) & 17 & 1792 & 819 & 0.46 \\
Six stylized LoRAs alone & 6 & 192 & 186 & 0.97 \\
Pool C + six stylized & 23 & 1984 & 848 & 0.43 \\
Community LoRAs, uncurated & 20 & 673 & 580 & 0.87 \\
Pool C + community & 37 & 2337 & 921 & 0.43 \\
\bottomrule
\end{tabular}
\end{table}

Unrelated adapters among themselves stay nearly incompressible ($\rho = 0.97$ for the stylized set and $0.87$ for the community set), as on the LLM side (an uncurated harvest of $12$ FLAN-T5 task LoRAs at $0.84$, LoRA-Retriever at $0.96$). A mixed pool still deploys. Adding the six stylized adapters grows the pooled $D$ by only $3.5\%$ ($819$ to $848$, against $186$ for the stylized set alone), and in a mixed $17$-adapter pool of Pool C and stylized adapters every adapter keeps at least $99.9\%$ of its delta energy ($12$ layers). New adapters are detectable before use. Every community adapter lies outside the Pool C bases (captured energy $0.15$ to $0.64$, mean $0.35$, against $0.999$ or more for in-pool adapters), and every one is fully represented after the extension of App.~\ref{app:dynamic_pools} (captured energy $1.00$ with about $27$ extra directions per layer). The union also absorbed engineering differences, such as varying per-layer ranks, unsupported layer types and file-format differences, which the merge path had to normalise case by case.

On the LLM side, the $12$ harvested FLAN-T5 adapters (car-rental dialogue, SQuAD question answering, meeting summarisation and Open-Orca instruction following, from different creators) need $D = 155$ alone but add only $35$ dimensions to the $48$-task LoRA-Hub substrate, and the union sits further below its matched null than LoRA-Hub itself ($0.67$ against $0.70$, App.~\ref{app:rho_decomposition}).

\paragraph{End-to-end check on the union.}
One substrate over the $37$-adapter union reproduces the equal-weight ambient composition on $30$ combinations (DINO similarity $0.999$ against per-combination PEFT merge). A no-adapter base arm on the same prompts and seed scores $0.90$ and $0.85$ at $N{=}3$ and $N{=}5$ against the same merges, so the metric discriminates. This control bounds metric headroom only. The $20$ combinations at $N{=}3$ use community adapters without trigger words, so their prompts and base image are identical.

\section{Changing pools: basis extension and tolerance}
\label{app:dynamic_pools}

We measure what happens when a LoRA arrives after the substrate is built, on Pool C ($24$ layers, energy-weighted, leave-one-out over all $17$ adapters).

\paragraph{Projection onto the unchanged bases.}
A held-out adapter's delta projected onto the substrate built from the other $16$ keeps on average only $0.30$ of its energy ($0.21$ to $0.45$), and composed $N{=}5$ error averages $0.34$ when that adapter is used immediately. A larger existing pool does not help (captured energy $0.10$, $0.18$, $0.26$ and $0.30$ at $K = 4, 8, 12, 16$), and twenty uncurated community adapters capture only $0.15$ to $0.64$ on the Pool C bases (App.~\ref{app:heterogeneous}). Each adapter brings its own few effective directions, so a new LoRA lies largely outside the existing span.

\paragraph{Basis extension.}
For each layer we take the new LoRA's residual against the existing bases, $R_B = (I - UU^\top) B_{\mathrm{new}}$ and $R_A = A_{\mathrm{new}} (I - VV^\top)$, and append its top-$q$ singular directions to $(U, V)$. Existing coordinates are untouched. Admission takes $33$\,s on CPU over all $722$ layers including the cache write, against $7$\,min $41$\,s on GPU for a full rebuild at $K{=}37$. Tab.~\ref{tab:extension} reports captured energy. Reaching $0.99$ needs on average only $12$ extra directions per layer ($5$ to $28$) against $D \approx 820$.

\begin{table}[H]
\centering
\small
\caption{Captured energy of a held-out Pool C adapter under each update path (mean over all $17$ adapters).}
\label{tab:extension}
\begin{tabular}{lcc}
\toprule
Update path & Captured energy & Extra directions per layer \\
\midrule
Projection, unchanged bases & 0.30 & 0 \\
Extension, $q = 4$ & 0.92 & 4 \\
Extension, $q = 8$ & 0.96 & 8 \\
Extension, $q = 16$ & 0.98 & 16 \\
Extension, $q = 32$ & 1.00 & 32 \\
\bottomrule
\end{tabular}
\end{table}

\paragraph{End-to-end check.}
We rebuilt both caches and regenerated $30$ fixed-seed $N{=}5$ combinations. The extension cache reproduces the full-recompute images at DINO similarity $0.999$ (worst $0.995$) on the $10$ combinations that contain the new LoRA, with CLIP-T unchanged. The projection cache drifts on exactly those combinations ($0.982$, worst $0.936$) and nowhere else ($0.999$ on the other $20$).

\paragraph{Sequential arrivals.}
Starting from a $6$-adapter core and admitting the other $11$ adapters one at a time, extension keeps $D$ smaller than a fresh recompute (final mean $D$ of $480$ against $822$). Composed error grows from $3.3\%$ after the first arrival to $6.4\%$ after the eleventh, against $0.84\%$ for a fresh build. This motivates the update rule in App.~\ref{app:limitations}, which is to extend on arrival, monitor composed error, and rebuild periodically off the serving path.

\section{Capability comparison}
\label{app:capabilities}

Tab.~\ref{tab:capabilities} compares \LoRASpace{} with the closest training-free methods on four requirements of the deployment we target, many LoRAs composed under a memory budget. The last two rows are measured rather than asserted (\S\ref{sec:experiments:overlap}, App.~\ref{app:dynamic_pools}).

\begin{table}[H]
\centering
\scriptsize
\setlength{\tabcolsep}{3pt}
\caption{Requirements of multi-LoRA composition under a memory budget. Composite covers ComposLoRA~\citep{zhong2024composlora} and CMLoRA~\citep{zou2025cmlora}.}
\label{tab:capabilities}
\begin{tabular}{@{}>{\raggedright\arraybackslash}p{4.2cm}cccccc@{}}
\toprule
Requirement & CtS & KnOTS & Core-Space & Composite & Per-combo truncated merge & \LoRASpace{} \\
\midrule
Serve any active subset, static or routed, from one precompute & no & no & no & n/a & no & yes \\
Context-dependent weights without $N$ backbone passes & no & no & no & no & no & yes \\
Predict before deployment whether a pool compresses & partial$^\ast$ & no & no & no & no & yes ($\rho$) \\
Admit a newly published adapter without recomputing the pool & no & no & no & n/a & no & yes (extension) \\
\bottomrule
\end{tabular}
\\[2pt]
{\scriptsize $^\ast$For single-adapter serving only.}
\end{table}

\section{High-cardinality prompt truncation mitigation}
\label{app:truncation}

CLIP-L's 77-token hard limit is reached in all $15$ sampled Pool C combos at $N{=}7$ (maximum prompt length $92$ tokens), silently dropping the last-listed LoRA's trigger from the cross-attention context.
A deterministic $20$-line trigger-deduplication rule (\texttt{compact}) fits every combo under $74$ tokens and recovers $+0.36$ GPT-4o composition points on average across the three \LoRASpace{} variants (Table~\ref{tab:truncation}).
The mitigation is orthogonal to the composition operator and applies equally to all multi-LoRA methods.
Pool C headline numbers use native triggers (identical truncation applied to every method); Pool B high-cardinality cells ($N=7,8$) use \texttt{compact}.

\begin{table}[h]
\centering
\small
\caption{GPT-4o composition and image quality under native (truncated) triggers versus the \texttt{compact} deduplication rule, $15$ Pool C combos at $N{=}7$.}
\label{tab:truncation}
\resizebox{\textwidth}{!}{%
\begin{tabular}{lcccc}
\toprule
Method & Composition (native) & Composition (\texttt{compact}) & $\Delta$ composition & $\Delta$ image \\
\midrule
\texttt{spatial-\LoRASpace{}}  & 4.15 & 4.27 & $+0.12$ & $-0.19$ \\
\texttt{temporal-\LoRASpace{}} & 3.82 & 4.38 & $\mathbf{+0.56}$ & $-0.01$ \\
\texttt{semantic-\LoRASpace{}} & 3.78 & 4.20 & $+0.42$ & $-0.13$ \\
\midrule
Mean & 3.92 & 4.28 & $+0.36$ & $-0.11$ \\
\bottomrule
\end{tabular}%
}
\end{table}

\section{Pool B (SD\,1.5, broader pool): full high-cardinality results}
\label{app:pool_b}

\begin{table}[h]
\centering
\small
\caption{Pool B (SD\,1.5, $17$ LoRAs, $48$ combos per cell) at $N \in \{7, 8\}$ with \texttt{compact} triggers. \textbf{Bold} = best; \underline{underline} = best non-\LoRASpace{}.}
\label{tab:poolb_n78}
\resizebox{\textwidth}{!}{%
\begin{tabular}{l|ccc|ccc}
\toprule
 & \multicolumn{3}{c|}{$N{=}7$} & \multicolumn{3}{c}{$N{=}8$} \\
Method & CLIP-T & ImgReward & DINO (vs.\ comp.) & CLIP-T & ImgReward & DINO (vs.\ comp.) \\
\midrule
\texttt{merge}       & 32.16 & 0.06  & 0.508          & 32.49 & 0.01  & 0.434 \\
\texttt{switch}      & \underline{\textbf{36.43}} & \underline{\textbf{1.23}} & 0.689
                     & \underline{\textbf{36.10}} & 1.10  & 0.650 \\
\texttt{composite}   & 35.65 & 0.61  & n/a            & 34.47 & 0.39  & n/a \\
\texttt{cmlora}      & 34.59 & 0.60  & \underline{\textbf{0.833}}
                     & 34.60 & 0.38  & \underline{\textbf{0.809}} \\
\midrule
\texttt{spatial-\LoRASpace{}}  & 34.79 & 0.70          & 0.512 & 34.03 & 0.59          & 0.399 \\
\texttt{temporal-\LoRASpace{}} & 34.58 & 1.07          & \textbf{0.698}
                                & 33.81 & \textbf{1.19} & \textbf{0.625} \\
\texttt{semantic-\LoRASpace{}} & 34.81 & 1.04          & 0.679 & 33.76 & 1.14          & 0.615 \\
\bottomrule
\end{tabular}%
}
\end{table}

\section{Routing ablation: full table}
\label{app:routing_full}

Tab.~\ref{tab:routing_abl} reports the full per-router quality for the routing ablation summarised in \S\ref{sec:experiments:routing}.

\begin{table}[h]
\centering
\small
\caption{Routing ablation on Pool A with the \LoRASpace{} substrate fixed (mean $\pm$ across-seed std over 3 seeds, $240$ combos/seed, $N \in \{2, \dots, 5\}$). All four routers exceed the merge baseline ($33.60$ / $0.760$, Tab.~\ref{tab:main_quality}); temporal gives the largest ImageReward gain over uniform.}
\label{tab:routing_abl}
\begin{tabular}{lrr}
\toprule
Router & CLIP-T $\uparrow$ & ImageReward $\uparrow$ \\
\midrule
uniform              & $35.82 \pm 0.10$ & $1.050 \pm 0.029$ \\
spatial              & $35.43 \pm 0.34$ & $0.857 \pm 0.062$ \\
temporal             & $\mathbf{35.92 \pm 0.10}$ & $\mathbf{1.086 \pm 0.030}$ \\
semantic             & $35.83 \pm 0.09$ & $1.050 \pm 0.026$ \\
\bottomrule
\end{tabular}
\end{table}

\section{Mechanism ablation: per-$N$ breakdown}
\label{app:mechanism}

The mechanism comparison in \S\ref{sec:experiments:quality} reports means over $N=2$ to $5$. Tab.~\ref{tab:mechanism_app} gives the per-$N$ breakdown for both rank-$D$ static-delta variants on Pool A (single seed $111$, $240$ combos total). \texttt{subspace-merge} (pool-wide $(U,V)$ baked into a static delta) and \texttt{truncated-merge} (per-combo SVD of the ambient merge) track each other within $\le 0.12$ CLIP-T and $\le 0.04$ ImageReward at every $N$, supporting the body conclusion that the basis choice does not affect quality at equal rank $D$. Both also track per-step uniform \LoRASpace{} (Tab.~\ref{tab:main_quality}) within fp16 noise.

\begin{table}[h]
\centering
\small
\caption{Mechanism ablation per $N$. \texttt{truncated-merge} (combo-specific basis) vs.\ \texttt{subspace-merge} (pool-wide basis), Pool A, seed $111$. Both run at ambient-merge inference cost; they differ only in configure-time precompute.}
\label{tab:mechanism_app}
\begin{tabular}{lrr|rr}
\toprule
 & \multicolumn{2}{c|}{\texttt{truncated-merge}} & \multicolumn{2}{c}{\texttt{subspace-merge}} \\
$N$ & CLIP-T $\uparrow$ & ImgRew $\uparrow$ & CLIP-T $\uparrow$ & ImgRew $\uparrow$ \\
\midrule
2 & 34.57 & 1.053 & 34.45 & 1.022 \\
3 & 35.31 & 1.037 & 35.39 & 1.028 \\
4 & 36.13 & 1.078 & 36.11 & 1.081 \\
5 & 37.00 & 1.126 & 37.10 & 1.115 \\
\midrule
Mean & 35.75 & 1.073 & 35.76 & 1.062 \\
\bottomrule
\end{tabular}
\end{table}

\section{Pool C quality across active-LoRA sizes}
\label{app:pool_c}

Pool C uses $17$ SDXL LoRAs across six categories (character, background, lighting, material, expression, accessory). Quality is measured at $N\in\{2,3,4,5,6\}$ across $40$ stratified combinations per $N$, seed $111$, $50$ DPM-Solver++ steps, $1024\times1024$.

\begin{table}[h]
\centering
\small
\caption{Pool C (SDXL, $17$ LoRAs) quality per $N$. Seed $111$, $200$ combos per method ($40$ per $N$, identical across methods). $95\%$ bootstrap CIs (mean over $N$): merge CLIP-T $[28.97, 29.86]$, ImgRew $[0.60, 0.84]$; composite $[31.5, 32.4]$, $[0.88, 1.06]$; uniform \LoRASpace{} $[32.5, 33.3]$, $[0.95, 1.13]$. Merge ImgRew drops from $1.30$ at $N{=}2$ to $0.18$ at $N{=}6$; \LoRASpace{} retains $0.69$.}
\label{tab:pool_c_quality}
\begin{tabular}{lrr|rr|rr}
\toprule
 & \multicolumn{2}{c|}{merge} & \multicolumn{2}{c|}{composite} & \multicolumn{2}{c}{\LoRASpace{} (uniform)} \\
$N$ & CLIP-T $\uparrow$ & ImgRew $\uparrow$ & CLIP-T $\uparrow$ & ImgRew $\uparrow$ & CLIP-T $\uparrow$ & ImgRew $\uparrow$ \\
\midrule
2 & 29.33 & 1.300 & 31.50 & 1.292 & 31.67 & 1.289 \\
3 & 31.32 & 0.894 & 32.95 & 1.108 & 33.60 & 1.089 \\
4 & 28.82 & 0.799 & 31.79 & 1.058 & 32.95 & 1.106 \\
5 & 28.53 & 0.426 & 30.48 & 0.829 & 32.01 & 1.028 \\
6 & 29.06 & 0.184 & 32.85 & 0.577 & 34.22 & 0.691 \\
\midrule
Mean & 29.41 & 0.721 & 31.91 & 0.973 & \textbf{32.89} & \textbf{1.040} \\
\bottomrule
\end{tabular}
\end{table}

\section{Speed and precompute cost}
\label{app:pool_c_speed}

Tables~\ref{tab:speed_memory_full} and~\ref{tab:pool_a_speed} report per-image wall-clock and GPU memory across all evaluated methods on Pool C (SDXL) and Pool A (SD\,1.5) respectively. Precompute cost is discussed below.

\begin{table}[H]
\centering
\small
\caption{Per-image speed and GPU memory on Pool C (SDXL, $1024^2$, $50$ steps, RTX\,3090). Peak VRAM measured at $N{\in}\{2,4,6\}$. $N{=}7$ entries linearly extrapolated. Static-mode \LoRASpace{} (\texttt{subspace-merge}) inherits ambient \texttt{merge}'s per-image cost by construction.}
\label{tab:speed_memory_full}
\resizebox{\textwidth}{!}{%
\begin{tabular}{lcccc}
\toprule
Method & Time @ $N{=}2$ (s) & Time @ $N{=}7$ (s) & Peak VRAM & Load VRAM \\
\midrule
\texttt{merge}     & \textbf{23.0} & \textbf{43.5}$^\dagger$ & 22.34\,GB & \textbf{18.19\,GB} \\
\texttt{switch}    & 23.3 & 23.4  & 22.83\,GB & 19.68\,GB \\
\texttt{composite} & 76.3 & 262.2 & 22.83\,GB & 19.69\,GB \\
\texttt{cmlora}    & 77.1 & 261.5 & 22.83\,GB & 19.69\,GB \\
\midrule
\texttt{spatial-\LoRASpace{}}  & 34.6 & 48.0 & 15.60 to 16.80\,GB & 6.46\,GB \\
\texttt{temporal-\LoRASpace{}} & 32.9 & 42.7 & 15.60 to 16.80\,GB & 6.46\,GB \\
\texttt{semantic-\LoRASpace{}} & 32.9 & 42.5 & 15.60 to 16.80\,GB & 6.46\,GB \\
\bottomrule
\end{tabular}%
}\\[2pt]
{\footnotesize $^\dagger$\,Extrapolated from measured $N{=}2{:}23.0$, $N{=}4{:}31.7$, $N{=}6{:}39.4$ (linear fit, slope $4.1$\,s/LoRA).}
\end{table}

\begin{table}[H]
\centering
\small
\caption{Per-image speed and GPU memory on Pool A (SD\,1.5, $512^2$, $200$ steps, RTX\,3090, $10$ combos, $2$ warmup $+ 3$ timed repeats per cell, mean across repeats). Body uses $100$ steps so wall-clock halves at deployment; relative ratios are preserved. \texttt{spatial-LoRASpace{}} cells contain measurement noise from the $\le 3$ repeats; the dominant signal is the $N$-scaling separation between \LoRASpace{} and \texttt{composite}.}
\label{tab:pool_a_speed}
\resizebox{\textwidth}{!}{%
\begin{tabular}{lcccc}
\toprule
Method & Time @ $N{=}2$ (s) & Time @ $N{=}5$ (s) & Peak VRAM & Load VRAM \\
\midrule
\texttt{merge}     & \textbf{18.9} & \textbf{32.7} & 3.49\,GB & \textbf{2.88\,GB} \\
\texttt{switch}    & 17.5          & 22.0          & 3.49\,GB & 2.88\,GB \\
\texttt{composite} & 68.2          & 142.6         & 3.49\,GB & 2.89\,GB \\
\texttt{cmlora}    & 59.5          & 138.5         & 3.49\,GB & 2.89\,GB \\
\midrule
\texttt{spatial-\LoRASpace{}}  & 37.6 & 30.7 & 4.09\,GB & 2.88\,GB \\
\texttt{temporal-\LoRASpace{}} & 28.7 & 44.3 & 4.09\,GB & 2.88\,GB \\
\texttt{semantic-\LoRASpace{}} & 31.7 & 34.0 & 4.09\,GB & 2.88\,GB \\
\bottomrule
\end{tabular}%
}
\end{table}

\paragraph{Precompute cost.}
Precompute is a one-time pool-level cost: load every LoRA's factor pair, build the energy-normalised $B$- and $A$-stacks per layer, run two thin SVDs, and project per-LoRA factors into the truncated bases.
For Pool A (SD\,1.5, $K{=}11$, $\sum_i r_i = 784$, $192$ LoRA-bearing layers) the dominant per-layer SVD is on a $\dout \times \sum r_i$ matrix with $\dout \in \{320, 640, 1280\}$; the streaming script runs end-to-end in $\mathbf{21.8}$\,s SVD time + $0.7$\,s pickle write on CPU (single-thread Intel Xeon) and produces an $866$\,MB cache.
For Pool C (SDXL, $K{=}17$, $\sum_i r_i = 1792$, $722$ LoRA-bearing layers) the same pipeline produces a $9.54$\,GB cache; the lazy per-layer streaming precompute peaks at $10.1$\,GiB of host memory (maximum resident set size).
The cache is loaded once at pipeline initialisation (Pool A: $0.6$\,s; Pool C: $5.4$\,s on a SATA SSD) and is shared across all subsequent generations: precompute is amortised across the entire experiment and never enters the per-image latency budget.

\paragraph{Where the dynamic-mode speedup comes from.}
To separate backbone-pass avoidance from the subspace path, we add a single-pass ambient mixture that keeps all $N$ adapters active as PEFT modules and runs one backbone pass (Tab.~\ref{tab:latency_split}, measured on Pool C with $15$ combinations at $N{=}5$, seed $111$, $50$ steps, $1024^2$, idle GPU, medians over the $14$ inter-image intervals of one sequential run per condition). Of the $5.3\times$ gain of \LoRASpace{} over composite at $N{=}5$, $4.9\times$ comes from running one backbone pass instead of five and $1.08\times$ from the substrate path. In isolation, the subspace path at $\tau = 0.99$ is $6$ to $7\%$ slower than the ambient factor path ($47.6$ against $44.4$\,ms at $4096$ tokens and $29.9$ against $28.0$\,ms at $1024$ tokens, per-layer microbenchmark summed over $48$ sampled layers with the real cache factors), as Eq.~\eqref{eq:complexity} predicts when $D > N \bar{r}$. The $1.08\times$ therefore reflects the overhead of keeping $N$ PEFT adapters active rather than an arithmetic advantage of the subspace. The ambient condition supports per-step weight changes, but per-token routing would need further implementation.

\begin{table}[H]
\centering
\small
\caption{Pool C per-image time at $N{=}5$, with a single-pass ambient mixture between composite and \LoRASpace{}.}
\label{tab:latency_split}
\begin{tabular}{lccr}
\toprule
Condition & Backbone passes & Adapter path & Median s/image \\
\midrule
\texttt{composite} & $5$ & none (per-pass guidance) & $181.9$ \\
Ambient single-pass & $1$ & $N$ thin factor pairs & $37.3$ \\
\LoRASpace{} & $1$ & shared subspace (Eq.~\eqref{eq:loraspace_operator}) & $34.5$ \\
\bottomrule
\end{tabular}
\end{table}

\clearpage

\section{Additional qualitative examples}
\label{app:qualitative_extra}

This appendix supplements Fig.~\ref{fig:qualitative} with paired generations for further Pool C combos. Within each row the prompt and seed are held fixed across all three methods, so visual differences reflect the composition operator only. We do not show Pool A or Pool B generations, because their character LoRAs depict real people (App.~\ref{app:limitations}). For reference, Pool A per-image cost on RTX\,3090 (SD\,1.5, $512^2$, 100 DPM-Solver++ steps) is $\sim$16\,s (merge), $\sim$58\,s at $N{=}4$ or $\sim$71\,s at $N{=}5$ (composite), and $\sim$17\,s at $N{=}4$ or $\sim$15\,s at $N{=}5$ (\LoRASpace{}), with peak VRAM $3.5$\,GB for merge and composite and $4.1$\,GB for \LoRASpace{}.

\paragraph{Pool C qualitative examples (SDXL, $1024^2$, 50 DPM-Solver++ steps).} Per-image cost on RTX\,3090: merge $\sim$31\,s ($N{=}4$) / $\sim$36\,s ($N{=}5$) / $\sim$39\,s ($N{=}6$); composite $\sim$151\,s ($N{=}4$) / $\sim$187\,s ($N{=}5$) / $\sim$225\,s ($N{=}6$); \LoRASpace{} $\sim$40\,s ($N{=}4$) / $\sim$42\,s ($N{=}5$) / $\sim$45\,s ($N{=}6$). Peak VRAM: merge $22.3$\,GB, composite $22.8$\,GB, \LoRASpace{} $15.6$ to $16.8$\,GB.

\begin{figure}[!htbp]
\centering
\includegraphics[width=0.31\linewidth]{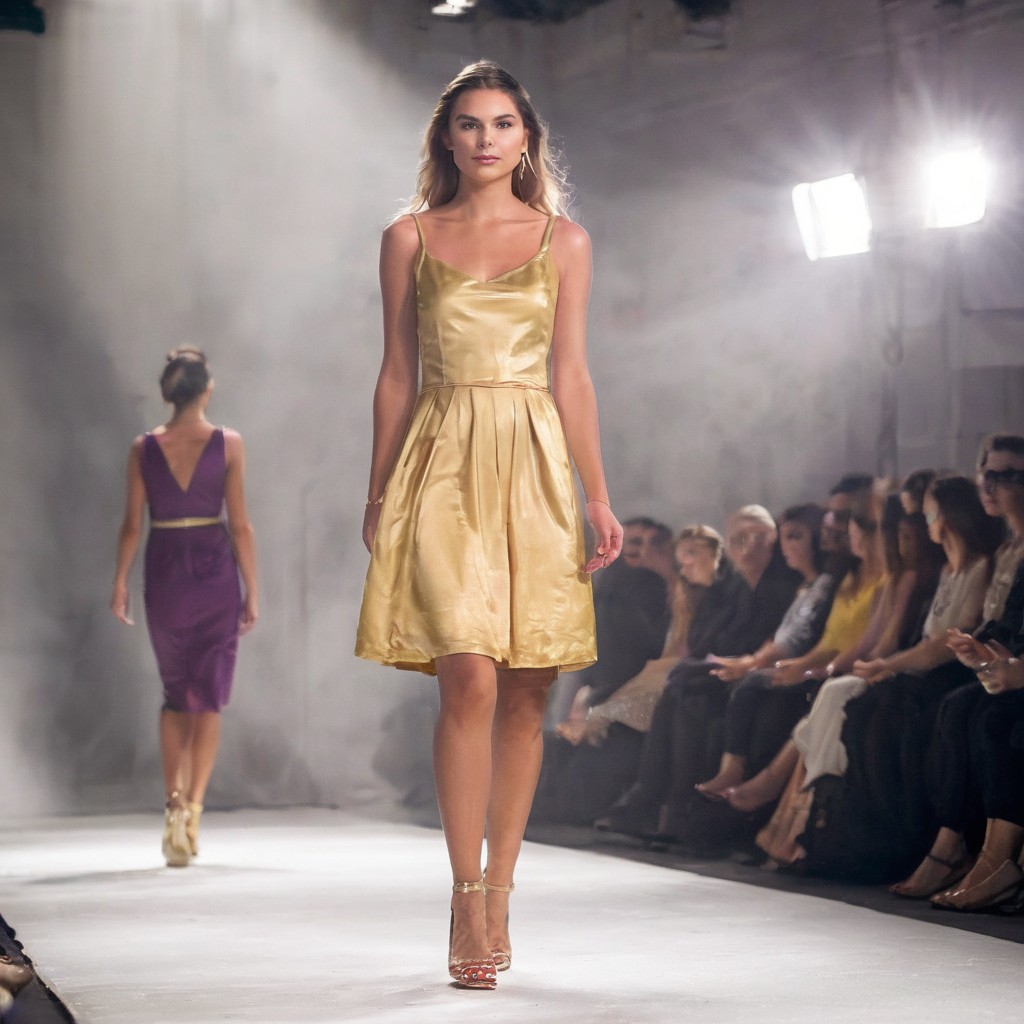}\hspace{6pt}%
\includegraphics[width=0.31\linewidth]{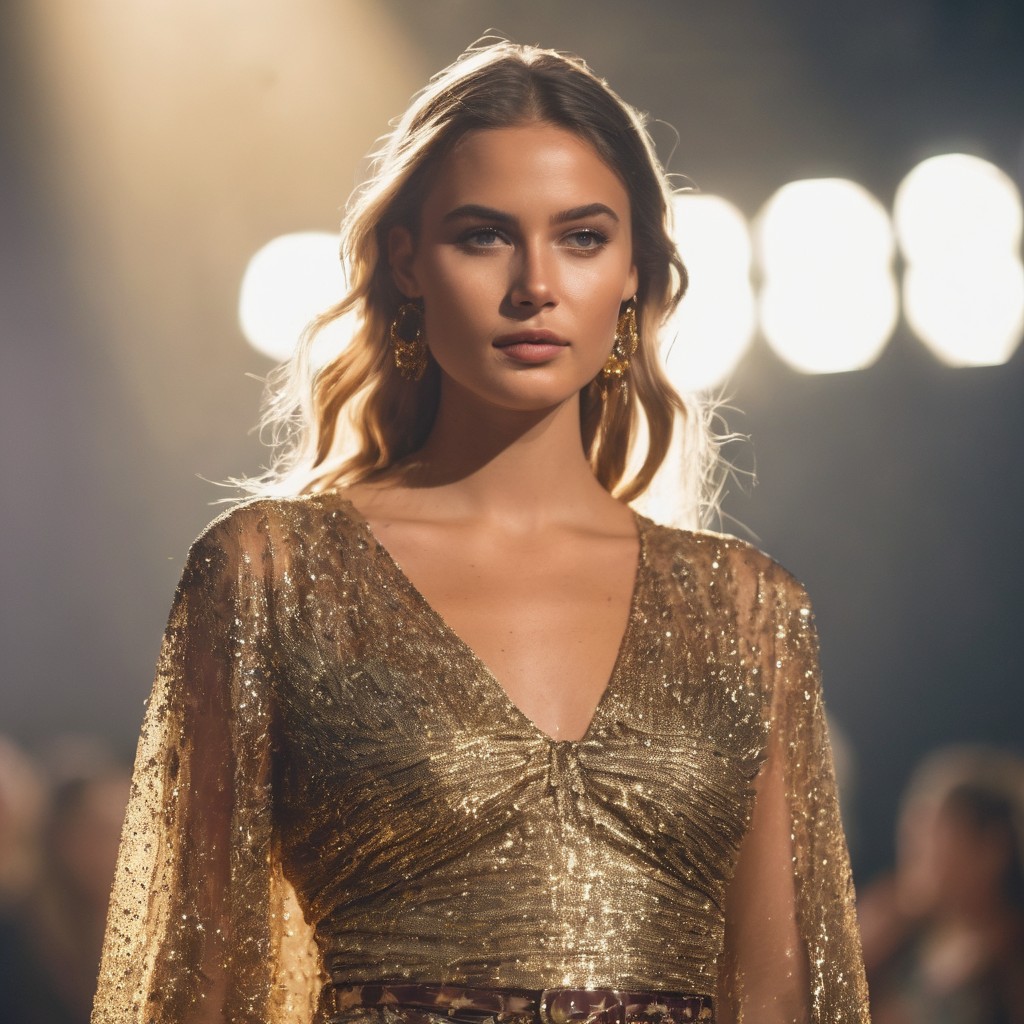}\hspace{6pt}%
\includegraphics[width=0.31\linewidth]{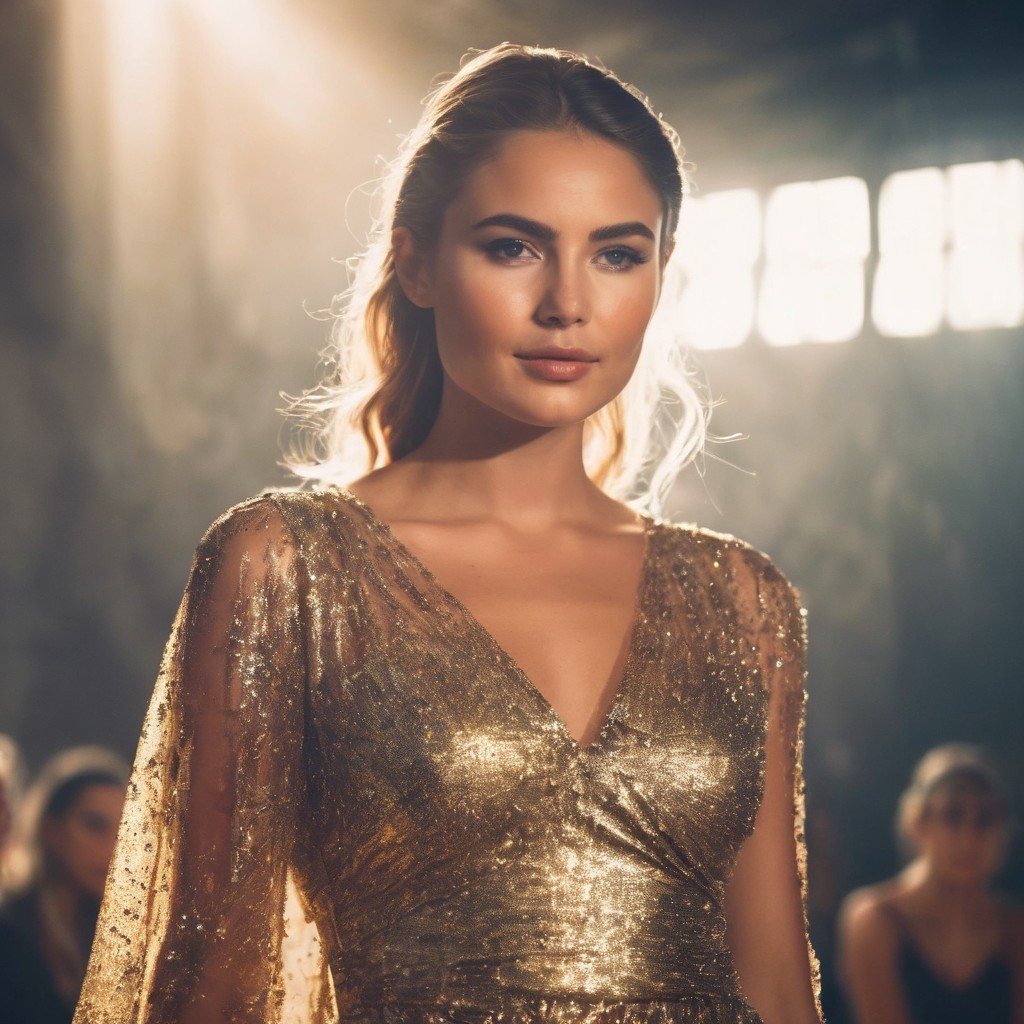}
\caption{Pool C $N{=}4$: VioletXL + catwalk + god rays + old gold. \emph{Left to right}: ambient \texttt{merge}, \texttt{composite}, uniform \LoRASpace{}.}
\label{fig:qual_c_n4}
\end{figure}

\begin{figure}[!htbp]
\centering
\includegraphics[width=0.31\linewidth]{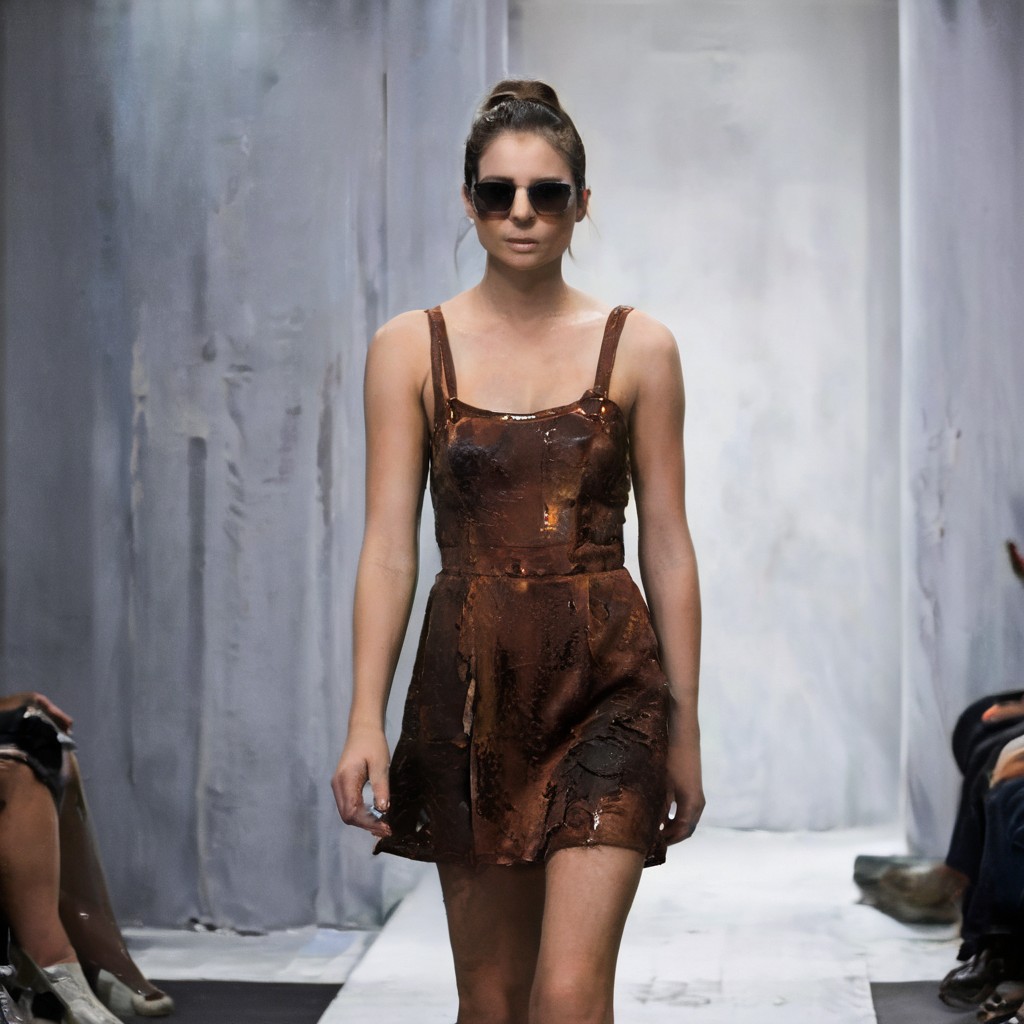}\hspace{6pt}%
\includegraphics[width=0.31\linewidth]{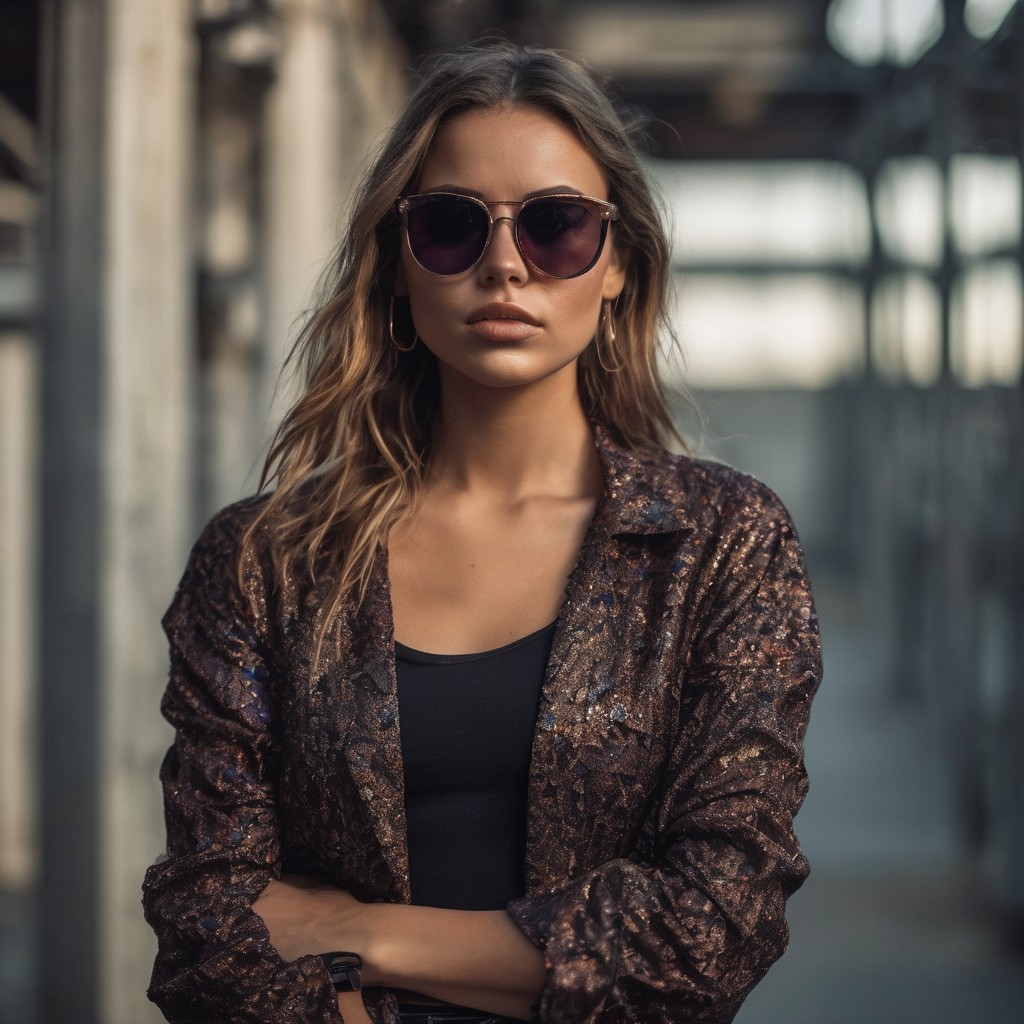}\hspace{6pt}%
\includegraphics[width=0.31\linewidth]{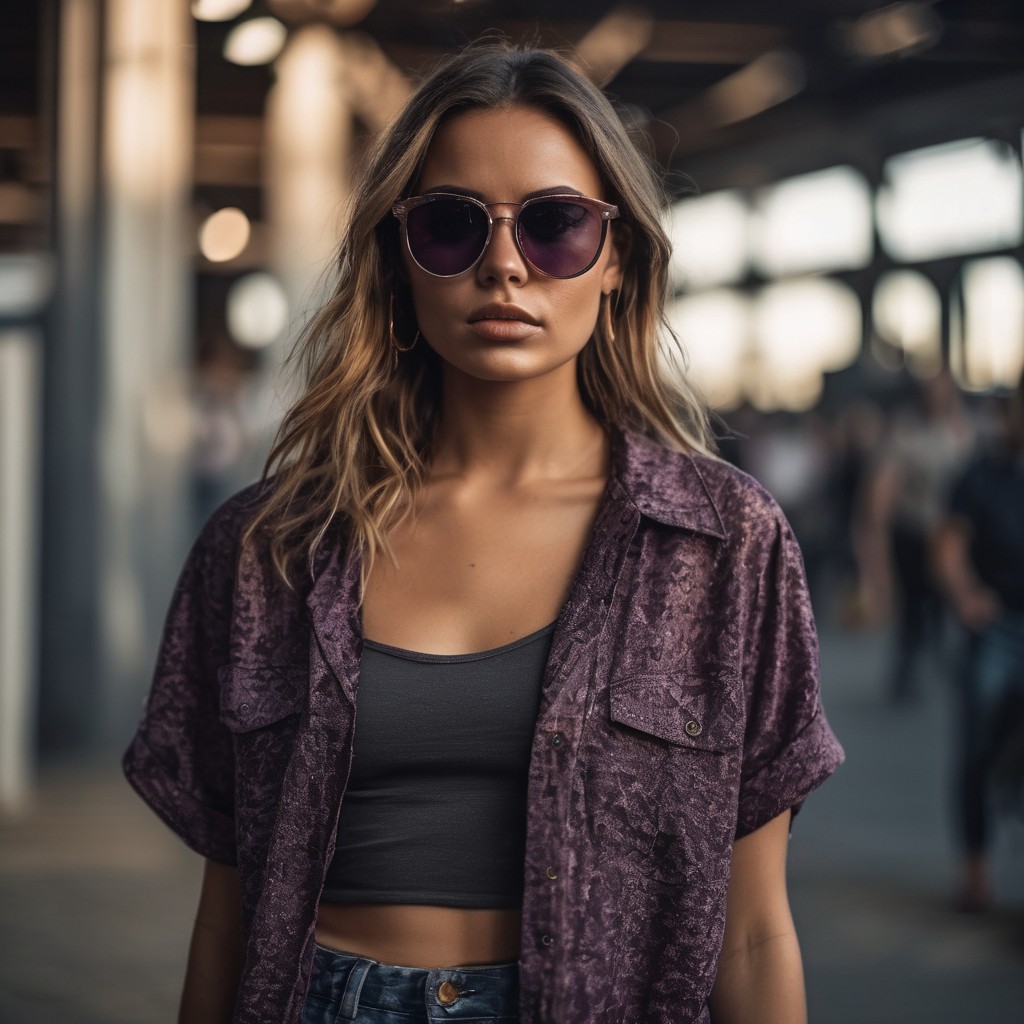}
\caption{Pool C $N{=}6$ (high-cardinality): VioletXL + catwalk + chiaroscuro + corrosion + fear + sunglasses.}
\label{fig:qual_c_n6}
\end{figure}

\clearpage

\section{Synthetic pool scaling stress test}
\label{app:synthetic_scaling}

To probe the asymptotic regime beyond real LoRA pools, we run $K \in \{4, 8, 16, 32, 64\}$ random rank-8 LoRAs on SD\,1.5 ($N{=}4$ active per call, $5$ DPM-Solver++~\citep{lu2025dpmsolverpp} steps, $512^2$).
\texttt{composite}'s per-step time grows from $0.119$ to $0.681$\,s over the $16\times$ pool growth at fixed $N$, while \LoRASpace{} stays within $0.132$ to $0.164$\,s and is $4.8\times$ faster at $K{=}64$.
Load-time GPU VRAM is constant at $2.16$ to $2.18$\,GB for \LoRASpace{} and grows linearly for \texttt{composite}.
The crossover from composite-faster to LoRASpace-faster occurs at $K \approx 8$; the gap widens monotonically thereafter.

\paragraph{Note on composite scaling at fixed $N$.}
The per-step asymptotic in Tab.~\ref{tab:complexity_per_method} is $O(NLd^2 S)$, independent of $K$. The $\sim K$ growth measured here at $N{=}4$ reflects implementation-level overhead in the diffusers PEFT-loading path, which iterates over loaded LoRAs per step at small step counts. At the $50$ denoising steps used in Pool C body measurements this overhead amortises and the per-step cost is dominated by the $O(NLd^2S)$ term; the body Pool C numbers (Tab.~\ref{tab:speed_memory_full}) are consistent with that asymptotic.

\section{Switch baseline scoreboard}
\label{app:switch_comparison}

We exclude single-active routing methods such as Switch~\citep{fleshman2025spectr} from the main composition comparison because they activate one LoRA at a time and never combine multiple LoRAs in a single forward pass. They are also order-sensitive by construction: rotating which LoRA fires at which step changes the output. For completeness, Table~\ref{tab:switch_scoreboard} reports Switch on the same pools as the main quality results.

\begin{table}[h]
\centering
\small
\caption{Switch baseline (one active LoRA at a time, same protocol as main results). Reported for completeness; not directly comparable to composition methods.}
\label{tab:switch_scoreboard}
\begin{tabular}{lcccc}
\toprule
Pool & Setting & CLIP-T $\uparrow$ & ImageReward $\uparrow$ & DINO (vs.\ comp.) \\
\midrule
Pool A SD\,1.5 & mean $N=2$ to $5$, 240 combos & 35.81 & 1.14 & 0.73 \\
Pool C SDXL    & mean $N=2$ to $6$, 200 combos & 32.10 & 1.15 & 0.79 \\
Pool B SD\,1.5 & $N=7$, 48 combos            & 36.43 & 1.23 & 0.69 \\
Pool B SD\,1.5 & $N=8$, 48 combos            & 36.10 & 1.10 & 0.65 \\
\midrule
LoRA-Retriever & Macro Task / samp.s$^{-1}$  & \multicolumn{3}{c}{$51.54$ Macro Task at $1.82$ samp.s$^{-1}$} \\
\bottomrule
\end{tabular}
\end{table}

The order-sensitivity manifests on Pool C at $N{=}7$ as a mean DINO self-similarity of $0.71$ between canonical and reversed LoRA orderings, with $10/10$ tested combos falling below the $0.95$ order-invariant threshold. Composition methods (merge, composite, \LoRASpace{}) are order-invariant by design.

\section{LLM D ablation: saturation and rank-orthogonality}
\label{app:llm_d_ablation}

The rank-overlap framework predicts two qualitatively different regimes for $D$ as $K_{\mathrm{pool}}$ grows. In the rank-bound regime ($\sum_i r_i < d$), $D$ grows sub-linearly with $K$ when LoRAs share structure. In the saturation regime ($\sum_i r_i \ge d$), $D$ is capped at the layer width and each new LoRA adds at most one new direction.

We measure both regimes empirically.

\paragraph{LoRA-Retriever near-orthogonality (rank-bound regime).}
The LLaMA-2-7B LoRA-Retriever pool ($K{=}48$)~\citep{zhao2024loraretriever} has uniform $r_i = 8$, $\sum_i r_i = 384$, against $d = 4096$. Across pool size sweeps with $K \in \{2, 4, 8, 16, 32, 48\}$, the rank-fill ratio stays at $\rho \ge 0.94$ throughout, with $\beta = 0.94$. LoRAs trained on disjoint NLP tasks (sentiment, machine translation, reading comprehension) contribute nearly orthogonal directions, consistent with tasks that share little structure (the pool sits at its spectrum-matched null, App.~\ref{app:rho_decomposition}).

\paragraph{LoRA-Hub saturation crossing (saturation regime).}
The LoRA-Hub FLAN-T5-Large pool~\citep{huang2024lorahub} has $K=196$ LoRAs with uniform $r_i = 16$ and layer width $d = 1024$. Saturation occurs at $K^{\star} = d / r = 64$, where $K \cdot r = d$. We measure $\bar{D}$ across $K \in \{4, 8, 16, 32, 64, 128, 196\}$. Below $K^{\star}$, each new LoRA adds approximately $11$ new dimensions to $\bar{D}$. Above $K^{\star}$, each new LoRA adds approximately $1$ new dimension, consistent with the saturation prediction. At the full pool $K=196$, $\bar{D} = 810$, which is $79\%$ of layer width: the subspace never fully fills even at $3{\times}$ saturation $K$, leaving residual capacity for additional LoRAs.

This is the first measurement we are aware of that crosses the saturation boundary in a real LoRA pool. It validates the framework's prediction of a regime change at $K = d/r$ and provides an empirical anchor for what to expect at very large pool sizes. Our diffusion pools are already width-saturated on most layers (App.~\ref{app:complexity}), but none was swept across the boundary, so LoRA-Hub is the only pool where we observe the crossing itself.

\section{LLM cross-modality: full scoreboard and extended discussion}
\label{app:llm_full}

\begin{table}[h]
\centering
\small
\caption{Full LoRA-Retriever benchmark~\citep{zhao2024loraretriever} on LLaMA-2-7B~\citep{touvron2023llama2} ($48$ LoRAs, $2{,}395$ samples, $14$ methods). \textbf{Bold} = best non-Oracle; \underline{underline} = best non-\LoRASpace{} non-Oracle.}
\label{tab:llm_full}
\begin{tabular}{lrrr}
\toprule
Method & Macro Task & Macro Domain & samp/s \\
\midrule
\texttt{base} (no LoRA)                & 4.24  & 4.17  & 0.60 \\
\texttt{oracle} (task-matched)            & 57.54 & 59.55 & 1.91 \\
\midrule
\texttt{merge} (Task Arithmetic)          & 45.72 & 46.96 & \textbf{2.18} \\
\texttt{ties}                             & 48.71 & 51.39 & 1.93 \\
\texttt{dare}                             & 40.86 & 41.22 & 2.04 \\
\texttt{knots}                            & 45.44 & 47.01 & 0.31 \\
\texttt{spectr}                           & 41.90 & 42.74 & 0.97 \\
\texttt{lorahub}                          & 46.19 & 46.68 & 0.12 \\
\texttt{switch}                           & 51.54 & 52.69 & 1.82 \\
\texttt{loraretriever-mixture} & \underline{53.55} & \underline{56.43} & 1.47 \\
\texttt{loraretriever-fusion}             & 43.56 & 44.14 & 1.77 \\
\midrule
\texttt{loraspace-fullpool} (no retrieval) & 50.49 & 53.00 & 1.48 \\
\texttt{loraspace-instructor}             & 54.01 & 56.79 & 1.44 \\
\texttt{loraspace-hybrid}                 & \textbf{54.51} & \textbf{57.20} & 1.52 \\
\bottomrule
\end{tabular}
\end{table}

\paragraph{Why \texttt{loraretriever-fusion} underperforms \texttt{loraretriever-mixture}.}
Fusion mode forms a fused left factor $\bar{A} = \sum_p w_p A_p$ and fused right factor $\bar{B} = \sum_p w_p B_p$, then applies $\bar{B}\bar{A}$: this introduces cross-products $B_p A_q$ for $p \ne q$, which are not part of the target delta sum $\sum_p w_p B_p A_p$.
These cross-terms are approximation errors that degrade composition quality relative to the clean mixture mode, explaining the substantial score difference ($43.56$ vs.\ $53.55$ Macro Task).

\paragraph{Ablation decomposition.}
The \texttt{loraspace-hybrid} score ($54.51$) decomposes as:
\texttt{loraspace-fullpool} ($50.49$) $+$ INSTRUCTOR retrieval gate ($+3.52$) $+$ calibrated spectral routing over retrieved subset ($+0.50$).
The substrate alone ($50.49$) is above all static merges but below the retrieval-augmented baselines. The retrieval gate is the dominant quality driver, as expected for a pool of $48$ highly diverse task LoRAs.

\paragraph{Per-domain breakdown.}
\texttt{loraspace-hybrid} beats Oracle on five of ten domains (commonsense, closed-book QA, sentiment, coreference, translation) where blending complementary LoRAs helps.
Losses concentrate on format-sensitive domains (reading comprehension, NLI, structured generation) where blending dilutes the precise output format required by the task-matched LoRA.

\section{Limitations}
\label{app:limitations}

This appendix collects the empirical scope and assumptions of the work referenced from \S\ref{sec:conclusion}.

\label{sec:conclusion:limitations}

\paragraph{Pool coverage and post-saturation behaviour.}
The empirical characterisation in \S\ref{sec:experiments:overlap} covers three diffusion pools (Pool A, B, C) and two LLM pools (LoRA-Retriever and LoRA-Hub). The diffusion pools already exceed the layer width on most layers (App.~\ref{app:complexity}), but only the LoRA-Hub pool is swept across the saturation boundary (Appendix~\ref{app:llm_d_ablation}), so how $D$ grows past it on diffusion pools is extrapolated rather than measured.

\paragraph{Changing pools.}
The substrate is computed for a given pool, and in deployment pools change. An unchanged substrate has no tolerance budget for a new LoRA. Projected onto the existing bases, a held-out Pool C adapter keeps only $0.30$ of its energy on average, and a larger existing pool does not fix this (App.~\ref{app:dynamic_pools}). Extending the bases with the new adapter's residual directions admits it in $33$\,s on CPU without recomputing the pool, and reproduces the full-recompute images, but the composed error grows slowly as arrivals accumulate ($3.3\%$ after the first to $6.4\%$ after the eleventh, against $0.84\%$ for a fresh build). The update rule we use is to extend on arrival, monitor composed error, and rebuild periodically off the serving path. Removing an adapter needs no recomputation, because the bases still span the remaining adapters.

\paragraph{Interpreting the rank-fill ratio.}
The rank-fill ratio may combine within-LoRA redundancy with cross-LoRA overlap. App.~\ref{app:rho_decomposition} quantifies the two with the decomposition $\rho = \rho_{\mathrm{intra}} \times \rho_{\mathrm{cross}}$ and a spectrum-matched null.

\paragraph{Efficiency baselines.}
Efficiency conclusions depend on the baseline implementation. Our merge, composite and single-pass ambient baselines use the PEFT runtime. Against a single-pass ambient mixture, the dynamic-mode gain at $\tau = 0.99$ comes from backbone-pass avoidance rather than from arithmetic inside the subspace (App.~\ref{app:pool_c_speed}).

\paragraph{Seed coverage of routing and mechanism ablations.}
The routing ablation (\S\ref{sec:experiments:routing}) is reported as mean $\pm$ across-seed standard deviation over three random seeds ($240$ stratified combinations per seed); the mechanism ablation (\S\ref{sec:experiments:quality}) is reported at a single seed. The main quality table (\S\ref{sec:experiments:quality}) reports CLIP-T and ImageReward across three random seeds with small across-seed standard deviations; GPT-4o evaluation is from one seed due to API cost. These are precision limitations rather than soundness ones: the across-seed standard deviations do not cross the inter-method gaps on ImageReward (where the routing pattern is established), and the pattern is consistent across $N$.

\paragraph{Layer scope.}
The operator is currently formulated for $1 \times 1$ linear projections, which covers all attention and most MLP modules in the diffusion and LLM backbones we test. Extensions to convolutional layers with kernel size larger than one, and to non-linear LoRA formats (DoRA, AdaLoRA), are out of scope here.

\paragraph{Router scope.}
The four routers we evaluate (uniform, spatial, temporal, semantic, plus the external retrieval coupling) are sufficient to demonstrate that the operator interface admits arbitrary $\{w_i\}$, but they are simple hand-designed examples and not exhaustive. Learned routers, in particular, may exploit the shared subspace structure in ways not captured by hand-designed signals. We see this as a natural extension rather than a limitation of the framework itself.

\paragraph{Asset and identity considerations.}
The three Pool A character LoRAs (also used in Pool B) are public CivitAI LoRAs that reproduce the likeness of real people, and CivitAI has since removed them. We use them only as inputs to the composition benchmark. We do not show their generations, name the people, or redistribute the weights, and our released pool manifests list them by SHA-256 hash only. We do not propose new likeness models. Downstream use should follow source platforms' policies and applicable law.

\section{LoRA pools and prompts}
\label{app:pools_and_prompts}

This appendix lists every LoRA used in our diffusion and LLM pools, with category groupings, trigger words, and example combo prompts. SHA-256 hashes and stable source URLs are in the per-pool JSON manifests in the \texttt{pools/} directory of the released code.

\subsection{Pool A (SD\,1.5, 11 LoRAs, reality style)}
\label{app:pool_a_details}

Five categories: 3 character + 2 clothing + 2 style + 2 background + 2 object. All LoRAs at $r{=}32$, $\alpha{=}16$ unless noted.

\begin{table}[h]
\centering
\small
\caption{Pool A LoRAs with trigger words. Source: CivitAI (URLs and SHA-256 hashes in the pool manifests).}
\label{tab:pool_a_adapters}
\begin{tabular}{l l >{\raggedright\arraybackslash}p{7cm}}
\toprule
ID & Name & Trigger words \\
\midrule
\multicolumn{3}{l}{\emph{character}} \\
\texttt{character\_1} & Real-person likeness LoRA & identity trigger and appearance descriptors (omitted) \\
\texttt{character\_2} & Real-person likeness LoRA & identity trigger and appearance descriptors (omitted) \\
\texttt{character\_3} & Real-person likeness LoRA & identity trigger and appearance descriptors (omitted) \\
\midrule
\multicolumn{3}{l}{\emph{clothing}} \\
\texttt{clothing\_1} & Thai University Uniform & mahalaiuniform, white shirt short sleeves, black pencil skirt \\
\texttt{clothing\_2} & School Dress & school uniform, white shirt, red tie, blue pleated microskirt \\
\midrule
\multicolumn{3}{l}{\emph{style}} \\
\texttt{style\_1} & Japanese Film Color Style & film overlay, film grain \\
\texttt{style\_2} & Bright Style & bright lighting \\
\midrule
\multicolumn{3}{l}{\emph{background}} \\
\texttt{background\_1} & Library Bookshelf & lib\_bg, library bookshelf \\
\texttt{background\_2} & Forest & slg, river, forest \\
\midrule
\multicolumn{3}{l}{\emph{object}} \\
\texttt{object\_1} & Umbrella & transparent umbrella \\
\texttt{object\_2} & Bubble Gum & bubble gum \\
\bottomrule
\end{tabular}
\end{table}

\paragraph{Pool A prompt template.} Identical across all combos in the pool; combo selection only changes which trigger lists get concatenated.
\begin{quote}\small\ttfamily
RAW photo, subject, 8k uhd, dslr, high quality, Fujifilm XT3, half-length portrait from knees up, $\langle$trigger\_1$\rangle$, $\langle$trigger\_2$\rangle$, \dots
\end{quote}
For the negative prompt we use the 44-token ComposLoRA reality negative (artifact + style-contamination + anatomy-failure tokens).

\paragraph{Example Pool A combos.}
\begin{itemize}[leftmargin=*,topsep=2pt,itemsep=1pt]
\item $N{=}2$, \texttt{[character\_1, background\_2]}: \emph{``RAW photo, subject, \dots, $\langle$character\_1 triggers$\rangle$, slg, river, forest''}.
\item $N{=}5$, \texttt{[character\_1, clothing\_1, style\_1, background\_1, object\_1]}: \emph{``RAW photo, subject, \dots, $\langle$character\_1 triggers$\rangle$, mahalaiuniform, white shirt short sleeves, black pencil skirt, film overlay, film grain, lib\_bg, library bookshelf, transparent umbrella''}.
\end{itemize}

\subsection{Pool B (SD\,1.5, 17 LoRAs, Pool A + 6 high-cardinality)}
\label{app:pool_b_details}

Pool A's 11 LoRAs plus 6 new ones across 3 new categories: detail (2), pose (2), expression (2). Probes the high-cardinality regime ($N \in \{7, 8\}$) where rank-fill is lowest ($\rho = 0.47$ at $K{=}17$).

\begin{table}[h]
\centering
\small
\caption{Pool B additional LoRAs (Pool A's 11 plus 6 new).}
\label{tab:pool_b_adapters}
\begin{tabular}{l l >{\raggedright\arraybackslash}p{7cm}}
\toprule
ID & Name & Trigger words \\
\midrule
\multicolumn{3}{l}{\emph{detail (quality enhancers, no explicit trigger)}} \\
\texttt{detail\_1} & Detail Tweaker & (none) \\
\texttt{detail\_2} & Add More Details & (none) \\
\midrule
\multicolumn{3}{l}{\emph{pose}} \\
\texttt{pose\_1} & Shirt Tug Pose & shirt tug \\
\texttt{pose\_2} & Covering Eyes Pose & covering eyes \\
\midrule
\multicolumn{3}{l}{\emph{expression}} \\
\texttt{expression\_1} & Screaming / Open Mouth & open\_mouth, screaming \\
\texttt{expression\_2} & Sharp Eyes / Kubrick Stare & sharp\_eyes \\
\bottomrule
\end{tabular}
\end{table}

\paragraph{Pool B trigger deduplication.} At $N \ge 7$, the body of the prompt approaches CLIP-L's 77-token limit. The Pool B body experiments use a \texttt{compact} trigger-deduplication rule (App.~\ref{app:truncation}) that drops repeated tokens across active LoRAs before assembly.

\subsection{Pool C (SDXL, 17 LoRAs)}
\label{app:pool_c_details}

Six categories: 3 character ($r{=}32$, $\alpha{=}16$) + 3 background + 4 lighting + 4 material + 2 expression + 1 accessory (last five at $r{=}128$, $\alpha{=}1$). SDXL backbone, $1024^2$ generation.

\begin{table}[h]
\centering
\small
\caption{Pool C LoRAs (SDXL).}
\label{tab:pool_c_adapters}
\begin{tabular}{l l >{\raggedright\arraybackslash}p{7cm}}
\toprule
ID & Name & Trigger words \\
\midrule
\multicolumn{3}{l}{\emph{character}} \\
\texttt{character\_1} & SL Nobodies CamelliaXL & CamelliaXL, a young woman, detailed face \\
\texttt{character\_2} & SL Nobodies HelenaXL & HelenaXL, a young woman, detailed face \\
\texttt{character\_3} & SL Nobodies VioletXL & VioletXL, a young woman, detailed face \\
\midrule
\multicolumn{3}{l}{\emph{background}} \\
\texttt{background\_1} & Polyhedron Catwalk & catwalk, runway, fashion show \\
\texttt{background\_3} & Polyhedron Power Grid & energy, high voltage, power pole, transformer station, tangled cables \\
\texttt{background\_4} & Polyhedron Dandelions & dandelion, dandelion seed, yellow dandelion \\
\midrule
\multicolumn{3}{l}{\emph{lighting}} \\
\texttt{lighting\_2} & Polyhedron God Rays & god rays, sunbeams, sunbeam \\
\texttt{lighting\_3} & Polyhedron Chiaroscuro & chiaroscuro style, dim light \\
\texttt{lighting\_4} & Polyhedron Fog & fog, haze, foggy, mist \\
\texttt{lighting\_5} & Polyhedron Golden Hour & golden hour, sunset, sunrise \\
\midrule
\multicolumn{3}{l}{\emph{material}} \\
\texttt{material\_1} & Polyhedron Corrosion & corrosion, rust \\
\texttt{material\_2} & Polyhedron Old Wood & old wood, wooden planks, wooden wall, wooden floor \\
\texttt{material\_3} & Polyhedron Old Gold & gold, old gold \\
\texttt{material\_4} & Polyhedron Old Copper & old copper, copper, patina, copper pot \\
\midrule
\multicolumn{3}{l}{\emph{expression}} \\
\texttt{expression\_1} & Polyhedron Anger & anger, screaming, angry \\
\texttt{expression\_3} & Polyhedron Fear & fear, fearful, anxious \\
\midrule
\multicolumn{3}{l}{\emph{accessory}} \\
\texttt{accessory\_3} & Sunglasses on Top of Head & with sunglasses on top of head \\
\bottomrule
\end{tabular}
\end{table}

\paragraph{Example Pool C combos.}
\begin{itemize}[leftmargin=*,topsep=2pt,itemsep=1pt]
\item $N{=}4$, \texttt{[character\_1, background\_1, lighting\_2, material\_1]}: \emph{``RAW photo, subject, \dots, CamelliaXL, a young woman, detailed face, catwalk, runway, fashion show, god rays, sunbeams, sunbeam, corrosion, rust''}.
\item $N{=}6$, \texttt{[character\_2, background\_3, lighting\_4, material\_2, expression\_1, accessory\_3]}: \emph{``RAW photo, subject, \dots, HelenaXL, \dots, energy, high voltage, power pole, transformer station, tangled cables, fog, haze, foggy, mist, old wood, wooden planks, wooden floor, anger, screaming, angry, with sunglasses on top of head''}.
\end{itemize}

\subsection{LLM pool (LLaMA-2-7B, 48 task LoRAs, LoRA-Retriever benchmark)}
\label{app:llm_pool_details}

48 task-specialised LoRAs from the LoRA-Retriever benchmark~\citep{zhao2024loraretriever}, all $r{=}8$. We adopt the authors' 10-domain grouping verbatim.

\begin{table}[h]
\centering
\small
\caption{LoRA-Retriever 48 task LoRAs by domain.}
\label{tab:llm_pool_adapters}
\begin{tabular}{l >{\raggedright\arraybackslash}p{9cm}}
\toprule
Domain (count) & Tasks \\
\midrule
closed-book QA (4)        & \texttt{arc\_challenge}, \texttt{arc\_easy}, \texttt{natural\_questions}, \texttt{trivia\_qa} \\
commonsense (4)           & \texttt{copa}, \texttt{hellaswag}, \texttt{piqa}, \texttt{story\_cloze} \\
coreference (2)           & \texttt{definite\_pronoun\_resolution}, \texttt{wsc} \\
NLI (10)                  & \texttt{anli\_r1}, \texttt{anli\_r2}, \texttt{anli\_r3}, \texttt{cb}, \texttt{mnli\_matched}, \texttt{mnli\_mismatched}, \texttt{qnli}, \texttt{rte}, \texttt{snli}, \texttt{wnli} \\
paraphrase (4)            & \texttt{glue\_mrpc}, \texttt{glue\_qqp}, \texttt{paws\_wiki}, \texttt{stsb} \\
RC w/ commonsense (2)     & \texttt{cosmos\_qa}, \texttt{record} \\
reading-comprehension (6) & \texttt{bool\_q}, \texttt{drop}, \texttt{multirc}, \texttt{openbookqa}, \texttt{squad\_v1}, \texttt{squad\_v2} \\
sentiment (4)             & \texttt{imdb\_reviews}, \texttt{sentiment140}, \texttt{sst2}, \texttt{yelp\_polarity\_reviews} \\
structured-to-text (4)    & \texttt{common\_gen}, \texttt{dart}, \texttt{e2e\_nlg}, \texttt{web\_nlg\_en} \\
translation (8)           & \texttt{para\_crawl\_enes}, \texttt{wmt14\_enfr}, \texttt{wmt16\_translate\_csen}, \texttt{wmt16\_translate\_deen}, \texttt{wmt16\_translate\_fien}, \texttt{wmt16\_translate\_roen}, \texttt{wmt16\_translate\_ruen}, \texttt{wmt16\_translate\_tren} \\
\bottomrule
\end{tabular}
\end{table}

LLM evaluation uses task-native prompts directly from the LoRA-Retriever benchmark (no LoRA trigger tokens; each LoRA is task-specialised, not concept-keyed). Multi-task composition replaces ``compose multiple LoRAs for one image'' with ``compose multiple LoRAs for one prompt'': at inference, we activate either the full 48-pool substrate (\texttt{loraspace-fullpool}) or a top-$K$ retrieved subset (\texttt{loraspace-instructor} / \texttt{loraspace-hybrid}).

\subsection{Combo selection protocol}
\label{app:combo_selection}

For each pool we use stratified random sampling over $\binom{K}{N}$ combos with category-balance constraints (each combo activates at most one LoRA per category where pool design allows). Pool A and Pool B use 240 combos per seed across $N \in \{2, 3, 4, 5\}$. Pool C uses 200 combos per method (40 per $N$ across $N \in \{2, \dots, 6\}$). Identical combo lists are shared across all methods within a pool to enable per-combo paired comparison.

\end{document}